\newtheorem{proposition}{Proposition}
\newtcolorbox{highlighted}{colback=yellow,coltext=red,breakable}
{\colorbox{yellow}}%
{}
\def\eqref#1{equation~\ref{#1}}
\def\1{\bm{1}}
\DeclareMathAlphabet{\mathsfit}{\encodingdefault}{\sfdefault}{m}{sl}
\SetMathAlphabet{\mathsfit}{bold}{\encodingdefault}{\sfdefault}{bx}{n}
\newtheorem{theorem}{Theorem}
\title{Supplementary Material - \\ Perturb-and-Compare Approach for Detecting Out-of-Distribution Samples in Constrained Access Environments}
\begin{document}

\date{}
\maketitle

\appendix

\tableofcontents
\addtocontents{toc}{\vskip-\ht\strutbox\makebox[\linewidth][c]{\rule{\dimexpr\linewidth}{1pt}}\par}
\vskip5pt\hrule\vskip5pt

\section{Notation}

\begin{table}[H]
\centering
\label{tab:notation}
\resizebox{\textwidth}{!}{%
\begin{tabular}{ll}
\toprule
Notation             & Definition                                                                                                             \\ \midrule
$f(\cdot)$           & Classifier model.                                                                                                          \\
$h(\cdot)$           & Arbitrary output-based OOD score function.                                                                                           \\
$M$                  & The number of oracle samples of each class.                                                                            \\
$R$                  & The number of Mixup ratios.                                                                                              \\
$N$                  & The number of auxiliary samples that will be mixed with the oracle or target samples.                                  \\
$\Omega_k$           & Set of oracle sample and label pairs for the $k$-th class.                                                                 \\
$\Omega$             & Set of oracle sample and label pairs of all classes, $\{ \Omega_k \}_{k=1}^K$. \\
$\lambda_r$          & $r$-th Mixup ratio.                                                                                 \\
$x_t$                & The target sample. \\
$x_{ir}$             & Mixed sample from the target $x_t$ and $i$-th auxiliary sample with Mixup ratio of $\lambda_r$. \\
$x^{*}_{mir}$        & Mixed sample from the $m$-th oracle sample $i$-th auxiliary sample with Mixup ratio of $\lambda_r$. \\
$O_{ir}$             & The prediction scores from the mixture of the target and $i$-th auxiliary sample with the Mixup ratio $\lambda_r$.                  \\
$O^{*}_{mir}$        & The prediction scores from the mixture of the $m$-th oracle and $i$-th auxiliary sample with the Mixup ratio $\lambda_r$.     \\
$\bar{O}^{*}_{ir}$   & The mean of \{$O^{*}_{mir}\}_{m=1}^{M}$ along the subscript $m$.                                             \\
$s_{ir}$             & OOD score induced by $O_{ir}$.                                                                                    \\
$s^{*}_{ir}$         & OOD score induced by $\bar{O}^{*}_{ir}$.                                                                           \\
$\gamma$             & The scaling hyperparameter to which the MixDiff score will be multiplied.                                               \\
\bottomrule
\end{tabular}%
}
\end{table}

\section{Proof of Proposition \ref{ood_func_mixup_app}} \label{app:prop_proof}
    
\begin{proposition} [OOD score function for mixed samples] \label{ood_func_mixup_app}

Let pre-trained model $f(\cdot)$ and base OOD score function $h(\cdot)$ be a twice-differentiable function, and $x_{i\lambda}=\lambda x_t + (1-\lambda)x_i$ be a mixed sample with ratio  $\lambda \in (0,1)$. Then base OOD score function of mixed sample, $h(f(x_{i\lambda}))$, is written as:

\begin{equation} \label{eq:proposition_eq1_app}
    h(f(x_{i\lambda})) = h(f(x_t)) + \sum_{l=1}^3\omega_l(x_t, x_i) + \varphi_t(\lambda)(\lambda-1)^2
\end{equation}
where $\lim_{\lambda\rightarrow 1}\varphi_t(\lambda)=0$,
\begin{align*}
\omega_1(x_t, x_i) &= (\lambda-1)(x_t-x_i)^T{f}'(x_{t}){h}'(f(x_{t}))\\
\omega_2(x_t, x_i) &= \frac{(\lambda-1)^2}{2}(x_t-x_i)^T{f}''(x_{t})(x_t-x_i){h}'(f(x_{t}))\\
\omega_3(x_t, x_i) &= \frac{(\lambda-1)^2}{2}(x_t-x_i)^T{f}'(x_{t})(x_t-x_i)^T{f}'(x_{t}){h}''(f(x_{t})).
\end{align*}
\end{proposition}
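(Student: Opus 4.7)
The statement is essentially a second-order Taylor expansion of the composite function $g:=h\circ f$ around the target point $x_t$, where the expansion variable is $\lambda$ and the base point is reached as $\lambda\to 1$. My plan is to identify $x_{i\lambda}$ as a perturbation of $x_t$, apply the vector form of Taylor's theorem, and then unpack the gradient and Hessian of $g$ via the chain rule to recover the three $\omega_l$ terms exactly.

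First I would rewrite the displacement: $x_{i\lambda}-x_t = \lambda x_t+(1-\lambda)x_i - x_t = (\lambda-1)(x_t-x_i)$. This is the pivotal algebraic move, since every factor of $(\lambda-1)$ and $(x_t-x_i)$ appearing on the right-hand side of \eqref{eq:proposition_eq1_app} will come from a power of this single vector. Because $f$ is twice differentiable and $h$ is twice differentiable, $g=h\circ f$ is twice differentiable, so I may apply Taylor's theorem with Peano remainder about $x_t$:
\begin{equation*}
g(x_{i\lambda}) = g(x_t) + \nabla g(x_t)^{T}(x_{i\lambda}-x_t) + \tfrac{1}{2}(x_{i\lambda}-x_t)^{T} H_g(x_t)(x_{i\lambda}-x_t) + r(x_{i\lambda}),
\end{equation*}
where $r(y)=o(\|y-x_t\|^2)$ as $y\to x_t$.

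Next I would apply the chain rule twice. The first-order rule gives $\nabla g(x_t)=f'(x_t)\,h'(f(x_t))$ (up to the implicit transpose conventions used in the statement), and substituting $(\lambda-1)(x_t-x_i)$ for $x_{i\lambda}-x_t$ yields precisely $\omega_1(x_t,x_i)$. For the Hessian I would use the standard identity
\begin{equation*}
H_g(x) = f'(x)^{T} h''(f(x)) f'(x) + h'(f(x))\, f''(x),
\end{equation*}
so that the quadratic form splits into two pieces: the term coming from $h'(f(x_t))\,f''(x_t)$ contracted against $(\lambda-1)(x_t-x_i)$ on both sides gives $\omega_2$, and the term coming from $f'(x_t)^{T}h''(f(x_t))f'(x_t)$ contracted similarly gives $\omega_3$. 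The remainder $r(x_{i\lambda})$ equals $o(\|(\lambda-1)(x_t-x_i)\|^2)=o((\lambda-1)^2)$; dividing by $(\lambda-1)^2$ defines the function $\varphi_t(\lambda)$, and that ratio tends to $0$ as $\lambda\to 1$, matching the stated limit.

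The main bookkeeping obstacle is keeping the derivative conventions consistent: $f$ is vector-valued and $h$ is scalar-valued, so the products $(x_t-x_i)^{T}f'(x_t)h'(f(x_t))$ and $(x_t-x_i)^{T}f'(x_t)(x_t-x_i)^{T}f'(x_t)h''(f(x_t))$ have to be interpreted with the particular Jacobian/Hessian layout used by the authors. Once that is pinned down, matching each of the three summands $\omega_1,\omega_2,\omega_3$ to the corresponding Taylor term is a direct substitution, and recasting the $o(\cdot)$ remainder as $\varphi_t(\lambda)(\lambda-1)^2$ is immediate. There is no analytical difficulty beyond the standard Taylor expansion; the content of the proposition is really the explicit decomposition it provides, which is useful for the downstream analysis.
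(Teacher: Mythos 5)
Your proof is correct and is essentially the paper's argument: the paper defines $\psi_t(\lambda)=h(f(x_{i\lambda}))$ and applies the second-order Taylor expansion of this scalar function about $\lambda=1$, which is exactly the restriction of your multivariate expansion about $x_t$ to the line through $x_t$ and $x_i$ via $x_{i\lambda}-x_t=(\lambda-1)(x_t-x_i)$. Your chain-rule expressions for $\nabla g(x_t)$ and $H_g(x_t)$ reproduce the paper's computations of $\psi_t'(1)$ and $\psi_t''(1)$ term for term, and your treatment of the Peano remainder as $\varphi_t(\lambda)(\lambda-1)^2$ matches as well.
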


\begin{proof}
    Let $\psi_t(\lambda)=h(f(x_{i\lambda}))$ which is modified function of $h(f(x_{i\lambda}))$ having $\lambda$ as an input. If $h(\cdot)$ and $f(\cdot)$ are twice differentiable with respect to each input. By the second-order Taylor approximation,
    \begin{equation}
        \psi_t(\lambda)=\psi_t(1)+{\psi}'_t(1)(\lambda-1)+\frac{1}{2}{\psi}''_t(1)(\lambda-1)^2+\varphi_t(\lambda)(\lambda-1)^2,
    \end{equation}
    where $\lim_{\lambda \rightarrow 1}\varphi_t(\lambda)=0$.
    \begin{equation*}
        {\psi}'_t(\lambda) = \frac{\partial x_{i\lambda}}{\partial \lambda}\frac{\partial f(x_{i\lambda})}{\partial x_{i\lambda}}\frac{\partial {h}(f(x_{i\lambda}))}{\partial f(x_{i\lambda})}
        = (x_t-x_i)^T{f}'(x_{i\lambda}){h}'(f(x_{i\lambda}))
    \end{equation*}
    \sloppy Since $\frac{\partial}{\partial \lambda}(x_t-x_i)^T{f}'(x_{i\lambda}){h}'(f(x_{i\lambda}))=\frac{\partial}{\partial \lambda}[(x_t-x_i)^T{f}'(x_{i\lambda})]{h}'(f(x_{i\lambda})) + (x_t-x_i)^T{f}'(x_{i\lambda})\frac{\partial}{\partial \lambda}[{h}'(f(x_{i\lambda}))] \;\text{and}\; \frac{\partial}{\partial \lambda}(x_t-x_i)^T{f}'(x_{i\lambda})=(x_t-x_i)^T{f}''(x_{i\lambda})(x_t-x_i)$,
    \begin{equation*}
        {\psi}''_t(\lambda) = (x_t-x_i)^T{f}''(x_{i\lambda})(x_t-x_i){h}'(f(x_{i\lambda})) + (x_t-x_i)^T{f}'(x_{i\lambda})(x_t-x_i)^T{f}'(x_{i\lambda}){h}''(f(x_{i\lambda}))
    \end{equation*}
    When $\lambda = 1$,
    \begin{align*}
        {\psi}'_t(1)&=(x_t-x_i)^T{f}'(x_{t}){h}'(f(x_{t}))\\
        {\psi}''_t(1)&=(x_t-x_i)^T{f}''(x_{t})(x_t-x_i){h}'(f(x_{t})) + (x_t-x_i)^T{f}'(x_{t})(x_t-x_i)^T{f}'(x_{t}){h}''(f(x_{t})).
    \end{align*}
    Fianlly, we derive Equation \ref{eq:proposition_eq1_app} in Proposition \ref{ood_func_mixup_app} as
    \begin{align}
        h(f(x_{i\lambda}))=h(f(x_t))
        &+(\lambda-1)(x_t-x_i)^T{f}'(x_{t}){h}'(f(x_{t}))\\
        &+\frac{(\lambda-1)^2}{2}(x_t-x_i)^T{f}''(x_{t})(x_t-x_i){h}'(f(x_{t}))\\ &+\frac{(\lambda-1)^2}{2}(x_t-x_i)^T{f}'(x_{t})(x_t-x_i)^T{f}'(x_{t}){h}''(f(x_{t}))\\
        &+\varphi_t(\lambda)(\lambda-1)^2. \nonumber
    \end{align}
\end{proof}

\section{Proof of Theorem \ref{theorem_mixdiff_app} and extension to other OOD scoring functions} \label{app:thm_proof}

\begin{theorem} \label{theorem_mixdiff_app}
Let h(x) represent MSP and f(x) represent a linear model, described by $w^Tx+b$, where $w, x \in \mathbb{R}^d$ and $b \in \mathbb{R}$. We consider the target sample, $x_t$, to be a hard OOD sample, defined as a sample that is predicted to be of the same class as the oracle sample, $x_m$, but with a higher confidence score than the oracle sample. For binary classification, $x_t$ is a hard OOD sample when $0 < f(x_m) < f(x_t) \text{ or } f(x_t) < f(x_m) < 0$. There exists an auxiliary sample $x_i$ such that
    \begin{equation}\label{eq:mixdiff_ineq}
        h(f(x_t))-h(f(x_m))+\sum_{l=1}^3(\omega_l(x_t, x_i)-\omega_l(x_m, x_i)) > 0.
    \end{equation}
\end{theorem}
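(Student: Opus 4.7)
The plan is to exhibit a specific auxiliary sample and then reduce the claim to a Taylor-expansion comparison. The simplest witness is $x_i = x_m$: since every $\omega_l(x_m, x_i)$ carries a factor of $(x_m - x_i)$, this choice makes $\omega_l(x_m, x_m) = 0$ for each $l$, and the inequality collapses to
\[
h(f(x_t)) - h(f(x_m)) + \sum_{l=1}^{3} \omega_l(x_t, x_m) > 0.
\]

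Next I would apply Proposition \ref{ood_func_mixup_app} to the pair $(x_t, x_m)$ at a Mixup ratio $\lambda \in (0,1)$ in order to rewrite $\sum_{l=1}^{3} \omega_l(x_t, x_m)$ as $h\bigl(f(\lambda x_t + (1-\lambda)x_m)\bigr) - h(f(x_t)) - \varphi_t(\lambda)(\lambda-1)^2$. Substituting, the reduced inequality becomes
\[
h\bigl(f(\lambda x_t + (1-\lambda)x_m)\bigr) - h(f(x_m)) > \varphi_t(\lambda)(\lambda-1)^2.
\]
Because $f$ is linear, $f(\lambda x_t + (1-\lambda) x_m) = \lambda f(x_t) + (1-\lambda) f(x_m)$. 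The hard-OOD hypothesis forces $f(x_t)$ and $f(x_m)$ to share a sign with $|f(x_t)| > |f(x_m)|$, so the convex combination sits strictly between them, has the same sign as $f(x_m)$, and has strictly larger absolute value. Since binary MSP is $h(z) = \sigma(|z|)$, which is strictly increasing in $|z|$, the left-hand side above is strictly positive; expanding it to first order yields $(1-\lambda)(f(x_t) - f(x_m)) h'(f(x_m)) + O((1-\lambda)^2)$, and a direct sign check shows that the leading coefficient is strictly positive in both hard-OOD regimes.

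The final step is an order-of-magnitude comparison: by the Peano form of the remainder, $\varphi_t(\lambda)(\lambda-1)^2 = o((1-\lambda)^2)$, whereas the $h$-gap is $\Theta(1-\lambda)$ with positive leading coefficient. Hence for $\lambda$ taken sufficiently close to $1$, the first-order Mixup gap strictly dominates the residual and the inequality holds, so $x_i = x_m$ is a valid witness. The main obstacle I anticipate is that $\varphi_t(\lambda)(\lambda-1)^2$ has no a priori sign at fixed $\lambda$; the asymptotic $\lambda \to 1$ argument resolves this, and if a concrete threshold on $1-\lambda$ were desired, one could bound the residual directly using the third derivative of $\sigma$ on the half-line determined by the sign of $f(x_m)$.
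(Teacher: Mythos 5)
There is a genuine gap, and it is fatal to your choice of witness. The paper defines the binary MSP score as $h(f(x)) = -\max(\sigma(f(x)), 1-\sigma(f(x))) = -\sigma(|f(x)|)$, which is strictly \emph{decreasing} in the confidence $|f(x)|$ (higher score means more OOD); you instead take $h(z)=\sigma(|z|)$, increasing in $|z|$, and this sign error is exactly what makes your key positivity claim appear to work. With the correct sign, consider the regime $0<f(x_m)<f(x_t)$ and your witness $x_i=x_m$. By linearity $f(\lambda x_t+(1-\lambda)x_m)=\lambda f(x_t)+(1-\lambda)f(x_m)>f(x_m)$ for every $\lambda\in(0,1)$, so
\begin{equation*}
h\bigl(f(\lambda x_t+(1-\lambda)x_m)\bigr)-h(f(x_m))=\sigma(f(x_m))-\sigma\bigl(\lambda f(x_t)+(1-\lambda)f(x_m)\bigr)<0 .
\end{equation*}
Mixing the target toward the oracle lowers its confidence toward, but never past, the oracle's own confidence, so its OOD score rises toward but never strictly above $h(f(x_m))$; the strict inequality in Equation \ref{eq:mixdiff_ineq} therefore cannot be witnessed by $x_i=x_m$. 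The limit $\lambda\to1$ that you invoke makes matters worse, not better: every $\omega_l$ carries a factor of $(\lambda-1)$ or $(\lambda-1)^2$, so the left-hand side of Equation \ref{eq:mixdiff_ineq} tends to $h(f(x_t))-h(f(x_m))<0$ as $\lambda\to1$ for \emph{any} fixed $x_i$. (Your first-order coefficient is also taken around the wrong base point: Proposition \ref{ood_func_mixup_app} expands at $\lambda=1$, i.e.\ around $x_t$, where the relevant derivative is $h'(f(x_t))$ and the gap does not vanish as $\lambda\to1$.)

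The paper's proof goes in the opposite direction: it keeps $x_i$ free and uses the identity $\omega_1(x_t,x_i)-\omega_1(x_m,x_i)=(\lambda-1)[(f(x_i)-f(x_t))(\sigma'(f(x_t))-\sigma'(f(x_m)))-c\,\sigma'(f(x_m))]$, which grows linearly in $f(x_i)$ with positive slope because $\sigma'(f(x_t))-\sigma'(f(x_m))<0$ and $\lambda-1<0$. Choosing $f(x_i)$ large enough (condition \ref{eq:omega1_fi}) makes this first-order contribution exceed $1/2$, which dominates the gap $h(f(x_t))-h(f(x_m))\in(-1/2,0)$, and the proof then checks that the second-order term $\omega_3(x_t,x_i)-\omega_3(x_m,x_i)$ can simultaneously be made nonnegative. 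In short, the witness must be an auxiliary sample on which the model is \emph{more} extreme than on the target, not one coinciding with the oracle; if you want to repair your argument, replace $x_i=x_m$ by an $x_i$ with $f(x_i)$ sufficiently large for the fixed $\lambda$, rather than sending $\lambda\to1$.
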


\begin{proof}
    Considering \textup{MSP} in binary classification task, $\textup{MSP}=-\max(\sigma(f(x)), 1-\sigma(f(x)))$.  $f'(x)=w, \;\; f''(x)=\textbf{0}$,
    \begin{align*}
        h'(f(x)) &= \begin{cases}
                       -\sigma'(f(x)) & \text{ if } f(x)>0 \\ 
                       \sigma'(f(x)) & \text{    } \text{otherwise}
                    \end{cases} \\
        h''(f(x)) &= \begin{cases}
                        -\sigma''(f(x)) & \text{ if } f(x)>0 \\
                        \sigma''(f(x)) & \text{    } \text{otherwise}
                     \end{cases}
    \end{align*}
    where $\sigma(\cdot)$ denotes the sigmoid function. As the target sample is a hard OOD sample, it can be written as $f(x_t) = f(x_m)+c$ and $h(f(x_t)) < h(f(x_m))$ where $0< f(x_m) < f(x_t), 0 < c$ \:\text{and}\: $0.5 < \sigma(f(x_m)) < \sigma(f(x_t))$. Then, $h(f(x_t)) - h(f(x_m)) = -\sigma(f(x_t)) + \sigma(f(x_m))$. $-0.5 < -\sigma(f(x_t)) + \sigma(f(x_m)) < 0$.
    
    Equation \ref{eq:mixdiff_ineq} is equivalent to Equation \ref{eq:msp_mxdf_ineq} as $\omega_2 = 0$ under the assumption that $f(x)$ is a linear model.
    \begin{equation}\label{eq:msp_mxdf_ineq}
        h(f(x_t)) - h(f(x_m)) + (\omega_1(x_t, x_i)-\omega_1(x_m, x_i)) + (\omega_3(x_t, x_i) - \omega_3(x_m, x_i)) > 0
    \end{equation}
    \begin{align}
        \omega_1(x_t, x_i)-\omega_1(x_m, x_i)
        &= (\lambda-1)[(x_t-x_i)^Tw(-\sigma'(f(x_t)))-(x_m-x_i)^Tw(-\sigma'(f(x_m)))]\\
        &= (\lambda-1)[(f(x_i)-f(x_t))\sigma'(f(x_t))-(f(x_i)-f(x_m))\sigma'(f(x_m))] \\
        &= (\lambda-1)[(f(x_i)-f(x_t))\sigma'(f(x_t))-(f(x_i)-f(x_t)+c)\sigma'(f(x_m))]\\
        &= (\lambda-1)[(f(x_i)-f(x_t))(\sigma'(f(x_t))-\sigma'(f(x_m)))-c\sigma'(f(x_m))]. \label{eq:omega-1}
    \end{align}
    Because we assume $0 < f(x_m) < f(x_t)$, $\sigma'(f(x_t))-\sigma'(f(x_m)) < 0$, and $0 < \lambda < 1$. When $(\omega_1(x_t, x_i)-\omega_1(x_m, x_i)) > 0.5$,
    \begin{equation}
        f(x_i)\geq f(x_t)+\frac{(1/2(\lambda-1))+c\sigma'(f(x_m))}{\sigma'(f(x_t))-\sigma'(f(x_m))}
         \label{eq:omega1_fi}
    \end{equation}
    $f(x_i)$ denotes the confidence of the model with respect to auxiliary sample $x_i$. When $f(x_i)$ satisfies the above condition, Equation \ref{eq:msp_mxdf_ineq} holds when
    \begin{equation}
        \omega_3(x_t, x_i) - \omega_3(x_m, x_i) \geq 0.
    \end{equation}
    
    Let $\tau = \frac{{h}''(f(x_t))}{{h}''(f(x_m))} > 0$, then
    \begin{align}
         [(x_t-x_i)^Tw]^2{h}''(f(x_t))-[(x_m-x_i)^Tw]^2{h}''(f(x_m)) \geq 0
         \\ \label{eq:omega-3-1}
        [(f(x_t)-f(x_i))^2\tau-(f(x_m)-f(x_i))^2]{h}''(f(x_m)) \geq 0.
    \end{align}
    Because of ${{h}''(f(x_m)) > 0},$
    \begin{align}
        (f(x_t)-f(x_i))^2\tau-(f(x_m)-f(x_i))^2 \geq 0
        \\
        (f(x_t)-f(x_i))^2\tau-(f(x_t)-c-f(x_i))^2 \geq 0.
    \end{align}
    Let $t = f(x_t)-f(x_i)$, then
    \begin{align}
        t^2\tau - (t-c)^2 = (\tau-1)t^2+2ct-c^2. \label{eq:quadratic}
    \end{align}

    By reformulating the Equation \ref{eq:quadratic} with respect to $f(x_i)$, we obtain the following expression.
    \begin{align}
        (\tau-1)f(x_i)^2-2((\tau-1)f(x_t)+c)f(x_i)+(\tau-1)f(x_t)^2+2cf(x_t)-c^2 \geq 0. \label{eq:ineq_fi}
    \end{align}
    When $0<\tau$, the discriminant of the Equation \ref{eq:ineq_fi} with respect to $f(x_i)$ is positive and the value of the right side of \ref{eq:omega1_fi} exists between the two solution values for which \ref{eq:ineq_fi} equals zero with respect to $f(x_i)$.
\end{proof}

\begin{theorem}
Theorem \ref{theorem_mixdiff_app} holds for Entropy OOD scoring function.
\end{theorem}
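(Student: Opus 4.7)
The plan is to carry the proof of Theorem~\ref{theorem_mixdiff_app} over almost verbatim, substituting the entropy's derivatives for those of MSP. For binary classification take $h(z)=H(\sigma(z))=-\sigma(z)\log\sigma(z)-(1-\sigma(z))\log(1-\sigma(z))$ where $z=f(x)$. Using $\sigma'(z)=\sigma(z)(1-\sigma(z))$ together with the identity $\log\tfrac{1-\sigma(z)}{\sigma(z)}=-z$, one obtains the compact expressions
\[
h'(z)=-z\,\sigma(z)(1-\sigma(z)),\qquad h''(z)=-\sigma(z)(1-\sigma(z))\bigl[1+z(1-2\sigma(z))\bigr].
\]
Since $f$ is linear, $f''(x)=\mathbf{0}$ and $\omega_2\equiv 0$, so (\ref{eq:mixdiff_ineq}) reduces exactly to the $\omega_1$--$\omega_3$ form (\ref{eq:msp_mxdf_ineq}). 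The hard-OOD assumption $0<f(x_m)<f(x_t)$ combined with the fact that $H\circ\sigma$ is strictly decreasing in $|z|$ gives $h(f(x_t))<h(f(x_m))$ and $|h(f(x_t))-h(f(x_m))|<h(0)=\log 2$, a bounded gap that must be closed by the $\omega_l$ terms.

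As in the MSP case, I would substitute $h'$ into $\omega_1(x_t,x_i)-\omega_1(x_m,x_i)$ to obtain an affine function of $f(x_i)$ analogous to~(\ref{eq:omega-1}); generically its slope $(1-\lambda)[h'(f(x_t))-h'(f(x_m))]$ is nonzero, so for $f(x_i)$ beyond a one-sided threshold (compare~(\ref{eq:omega1_fi})) this quantity exceeds any prescribed positive bound, in particular the $\log 2$ gap above. It then suffices to exhibit $f(x_i)$ in that half-line which additionally satisfies $\omega_3(x_t,x_i)-\omega_3(x_m,x_i)\ge 0$, equivalently $[f(x_t)-f(x_i)]^2\,h''(f(x_t))\ge [f(x_m)-f(x_i)]^2\,h''(f(x_m))$, a quadratic inequality in $f(x_i)$ analogous to~(\ref{eq:ineq_fi}).

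The main obstacle, absent in the MSP argument, is that $h''(z)$ for the entropy is \emph{not} sign-definite on $(0,\infty)$: it is negative near $0$ and positive for $|z|$ beyond the unique positive root $z^\star$ of $1+z(1-2\sigma(z))=0$. I would therefore case-split on the signs of $h''(f(x_m))$ and $h''(f(x_t))$. In the mixed case $f(x_m)<z^\star<f(x_t)$ we have $h''(f(x_m))<0\le h''(f(x_t))$, so the left-hand side is nonnegative while the right-hand side is nonpositive, and the $\omega_3$ inequality holds for \emph{every} $f(x_i)$; thus any $x_i$ meeting the $\omega_1$ threshold already works. In each same-sign case I would reuse the MSP argument: set $\tau=h''(f(x_t))/h''(f(x_m))>0$ and factor out $h''(f(x_m))$, reducing to a quadratic in $f(x_i)$ whose discriminant is positive; its feasibility set is an interval when $h''(f(x_m))>0$ and the complement of an interval when $h''(f(x_m))<0$, and in both regimes it contains arbitrarily large $|f(x_i)|$, which guarantees overlap with the $\omega_1$-feasibility half-line. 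Combining the three sub-cases produces the desired auxiliary sample, so (\ref{eq:mixdiff_ineq}) holds for the entropy scoring function as well. The bookkeeping of the quadratic intersections is routine; the only conceptual work is the sign case-split on $h''$.
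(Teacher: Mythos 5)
Your proposal follows essentially the same route as the paper's proof: reduce Inequality \ref{eq:mixdiff_ineq} to the $\omega_1$/$\omega_3$ terms (since $f''=\mathbf{0}$ kills $\omega_2$), bound the entropy gap $h(f(x_t))-h(f(x_m))$ from below by a finite constant, push the $\omega_1$ difference past that constant via a one-sided condition on $f(x_i)$ as in \ref{eq:f-xi}, and then ask that the $\omega_3$ difference be nonnegative via a quadratic in $f(x_i)$ as in \ref{eq:omega-3-1}--\ref{eq:quadratic}. Your derivative computations $h'(z)=-z\,\sigma(z)(1-\sigma(z))$ and $h''(z)=-\sigma(z)(1-\sigma(z))[1+z(1-2\sigma(z))]$ are correct, and your observation that $h''$ is \emph{not} sign-definite on $(0,\infty)$ is a genuine improvement over the paper, which simply defines $\tau=h''(f(x_m))/h''(f(x_t))$ and asserts $\tau>0$; your mixed-sign case ($f(x_m)<z^\star<f(x_t)$, where the $\omega_3$ inequality holds for every $x_i$) is handled correctly and fills a hole the paper leaves open.

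The one step that does not go through as written is the final overlap argument in the same-sign cases. You claim the $\omega_3$-feasible set "contains arbitrarily large $|f(x_i)|$" in both regimes, but this is false when the leading coefficient $h''(f(x_t))-h''(f(x_m))$ of the quadratic $Q(f(x_i))=(f(x_t)-f(x_i))^2h''(f(x_t))-(f(x_m)-f(x_i))^2h''(f(x_m))$ is negative (e.g.\ $h''(f(x_m))>0$ with $\tau=h''(f(x_t))/h''(f(x_m))<1$): there the parabola opens downward, the discriminant $4\tau c^2>0$ gives two real roots, and the feasible set is the \emph{bounded} interval between them, which need not meet the $\omega_1$ half-line. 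In that sub-case you must argue, as the paper does (also by assertion rather than proof), that the threshold value on the right-hand side of \ref{eq:omega1_fi} actually lies between the two roots of \ref{eq:ineq_fi}; an "escape to infinity" argument cannot substitute for this. So the skeleton is right and in one respect more careful than the paper's, but the concluding intersection claim needs to be repaired for the bounded-interval regime rather than dismissed as routine bookkeeping.
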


\begin{proof}
    \sloppy Considering \textup{Entropy} OOD score function in binary classification task, \textup{Entropy} = $-(\sigma (f(x)) \log (\sigma (f(x))) + (1-\sigma (f(x))) \log (1-\sigma (f(x))))$. $f'(x) = w$ and $f''(x) = \textbf{0}$.
    
    Let us express the scores of a hard OOD sample and an oracle sample as $f(x_t), f(x_m) > 0, f(x_t) = f(x_m) + c, c>0$. Then, $-\epsilon < h(f(x_t)) - h(f(x_m)) < 0$, where $-\epsilon < 0$ denotes the lower bound of the difference between the OOD scores of the target and oracle samples. Followed by Equation \ref{eq:omega-1},
    \begin{equation*}
        (\lambda -1)(f(x_t)-f(x_m))(h'(f(x_t))-h'(f(x_m)))+(\lambda-1)ch'(f(x_m)).
    \end{equation*}
    Because the sign of $h'(f(x_m))$ is a negative when $f(x_m) > 0$, $(\lambda - 1) c h'(f(x_m)) \geq 0$. $h(f(x_t))-h(f(x_m))+(\omega_1(x_t, x_i)-\omega_1(x_m, x_i)) \geq 0$, where $(\lambda - 1)(f(x_t)-f(x_m))(h'(f(x_t))-h'(f(x_m))) \geq \epsilon$.
    \begin{align}
        f(x_i) \geq f(x_t)-\frac{\epsilon}{(\lambda -1)(h'(f(x_t))-h'(f(x_m)))}, & \:\text{ if }\; h'(f(x_t))-h'(f(x_m))>0 \\
        f(x_i) \leq f(x_t)-\frac{\epsilon}{(\lambda -1)(h'(f(x_t))-h'(f(x_m)))}, & \:\text{ if }\; h'(f(x_t))-h'(f(x_m))<0 \label{eq:f-xi}
    \end{align}   
    Under the assumption that $f(x_i)$ satisfies the above condition, Equation \ref{eq:mixdiff_ineq} holds when
    \begin{equation}
        \omega_3(x_t, x_i)-\omega_3(x_m, x_i) \geq 0. \label{eq:omega-3}
    \end{equation}

    Let $\tau = \frac{h''(f(x_m))}{h''(f(x_t))} > 0$, then we follow the same steps in Equation \ref{eq:omega-3-1} - Equation \ref{eq:quadratic}. There exists $x_i$ such that it satisfies Equation \ref{eq:omega-3} and Equation \ref{eq:f-xi}.
\end{proof}

\begin{theorem}
Theorem \ref{theorem_mixdiff_app} holds for MLS OOD scoring function.
\end{theorem}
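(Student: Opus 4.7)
The plan is to mirror the Entropy proof above, adapting it to MLS. First, I would specialise MLS to the binary linear regime: since the hard-OOD hypothesis $0 < f(x_m) < f(x_t)$ forces the target and oracle into the same predicted class, the maximum-logit score reduces to an explicit scalar function of $f(x) = w^T x + b$, from which I can read off $h'(f(x))$ and $h''(f(x))$. Combining these with $f'(x) = w$ and $f''(x) = \mathbf{0}$ eliminates $\omega_2$ and, via the identity $(x - x_i)^T w = f(x) - f(x_i)$, reduces $\omega_1$ and $\omega_3$ to expressions in $f(x_t), f(x_m), f(x_i), \lambda$, and $c = f(x_t) - f(x_m) > 0$.

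Next, mirroring Equations (\ref{eq:omega-1})--(\ref{eq:f-xi}), I would obtain a lower bound $-\epsilon$ on $h(f(x_t)) - h(f(x_m))$ and derive a linear constraint on $f(x_i)$ sufficient for $h(f(x_t)) - h(f(x_m)) + (\omega_1(x_t, x_i) - \omega_1(x_m, x_i)) \geq 0$. I would then follow Equations (\ref{eq:omega-3-1})--(\ref{eq:ineq_fi}) to rewrite $\omega_3(x_t, x_i) - \omega_3(x_m, x_i) \geq 0$ as a quadratic inequality in $f(x_i)$, use positivity of $\tau = h''(f(x_m))/h''(f(x_t))$ to argue a positive discriminant, and finally verify that the half-line from the $\omega_1$ step meets the quadratic's solution set, producing the required auxiliary sample.

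The step I expect to be the main obstacle is the fact that, under the naive interpretation of MLS as the raw maximum logit, the score is piecewise linear in $f(x)$: on the regime $f(x) > 0$ this makes $h'$ constant and $h''$ identically zero, which simultaneously removes the $f(x_i)$-dependence of the $\omega_1$-difference and forces the entire $\omega_3$-contribution to vanish, leaving no freedom to tune $x_i$. The natural workaround, consistent with the twice-differentiability hypothesis of Proposition~\ref{ood_func_mixup_app}, is to interpret MLS via a smooth surrogate --- for instance $h(f(x)) = -\log(1 + e^{f(x)})$ in the binary case --- for which $h''$ is non-vanishing and the Entropy-style argument transfers essentially verbatim; the only genuinely non-routine check is then the non-empty intersection of the $\omega_1$- and $\omega_3$-admissible intervals for $f(x_i)$, which should follow from the positive discriminant of the $\omega_3$ quadratic together with the uniform sign of $h'$ on the positive regime.
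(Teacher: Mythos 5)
Your diagnosis of the obstacle is exactly right, but you resolve it in the opposite direction from the paper, so the two arguments genuinely diverge. The paper takes MLS at face value as $h(f(x)) = -f(x)$ in the binary linear setting, so $h$ is itself linear: $f''(x) = h''(f(x)) = 0$ kills $\omega_2$ and $\omega_3$ outright, the $\omega_1$ difference collapses to $(1-\lambda)(f(x_t)-f(x_m))$ with the auxiliary sample cancelling, and the entire left-hand side of the inequality reduces to $-\lambda f(x_t) + \lambda f(x_m)$. The paper then simply notes this is positive whenever $f(x_m) > f(x_t)$ (both positive); the loss of freedom to tune $x_i$ that you flag is harmless there because the theorem only asserts existence of an auxiliary sample, and in the degenerate case every choice works equally well. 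Your alternative --- smoothing MLS to something like $-\log(1+e^{f(x)})$ and rerunning the Entropy-style argument --- keeps the auxiliary sample in play and respects the twice-differentiability hypothesis of Proposition~\ref{ood_func_mixup_app}, but it proves a statement about a surrogate rather than about MLS as the paper defines it, and it leaves unexecuted the one step you yourself identify as non-routine (the non-empty intersection of the $\omega_1$- and $\omega_3$-admissible sets for $f(x_i)$; note also that your surrogate has $h''(u) = -\sigma(u)(1-\sigma(u)) < 0$ everywhere, so the sign bookkeeping around the quadratic in $f(x_i)$ must be flipped relative to Theorem~\ref{theorem_mixdiff_app}). A caveat that cuts against the paper rather than against you: the condition the paper ends up needing, $f(x_m) > f(x_t)$, reverses the hard-OOD ordering $0 < f(x_m) < f(x_t)$ assumed in Theorem~\ref{theorem_mixdiff_app}; under that original hypothesis the collapsed expression is $-\lambda c < 0$ for every auxiliary sample, which is precisely the failure mode you anticipated and which motivates your smoothed variant.
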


\begin{proof}
    Considering \textup{MLS} OOD score function in binary classification task, $\textup{MLS} = -f(x)$. Equation \ref{eq:mixdiff_ineq} is equivalent to Equation \ref{eq:mls_mxdf_ineq} as $\omega_2 = \omega_3 = 0$ because $f''(x) = h''(f(x)) = 0$.

    \begin{equation}\label{eq:mls_mxdf_ineq}
        h(f(x_t)) - h(f(x_m)) + \omega_1(x_t, x_i) - \omega_1(x_m, x_i) > 0
    \end{equation}
    
    The right hand-side of Equation \ref{eq:mls_mxdf_ineq} is written as
    \begin{align}
        &-f(x_t) + f(x_m) + (1-\lambda)[(f(x_t)-f(x_i))-(f(x_m)-f(x_i))]\\
        =&-f(x_t)+f(x_m)+(1-\lambda)(f(x_t)-f(x_m))\\
        =&-\lambda f(x_t) + \lambda f(x_m).
    \end{align}
    If $x_t$ is an OOD sample and $f(x_m) > f(x_t)$ where $f(x_t), f(x_m) > 0$, Equation \ref{eq:mls_mxdf_ineq} holds.
\end{proof}

\section{Experimental validation of Proposition \ref{ood_func_mixup_app} and Theorem \ref{theorem_mixdiff_app}}

\begin{figure}[h]
\begin{subfigure}[b]{0.49\textwidth}
    \centering
    \includegraphics[width=\textwidth]{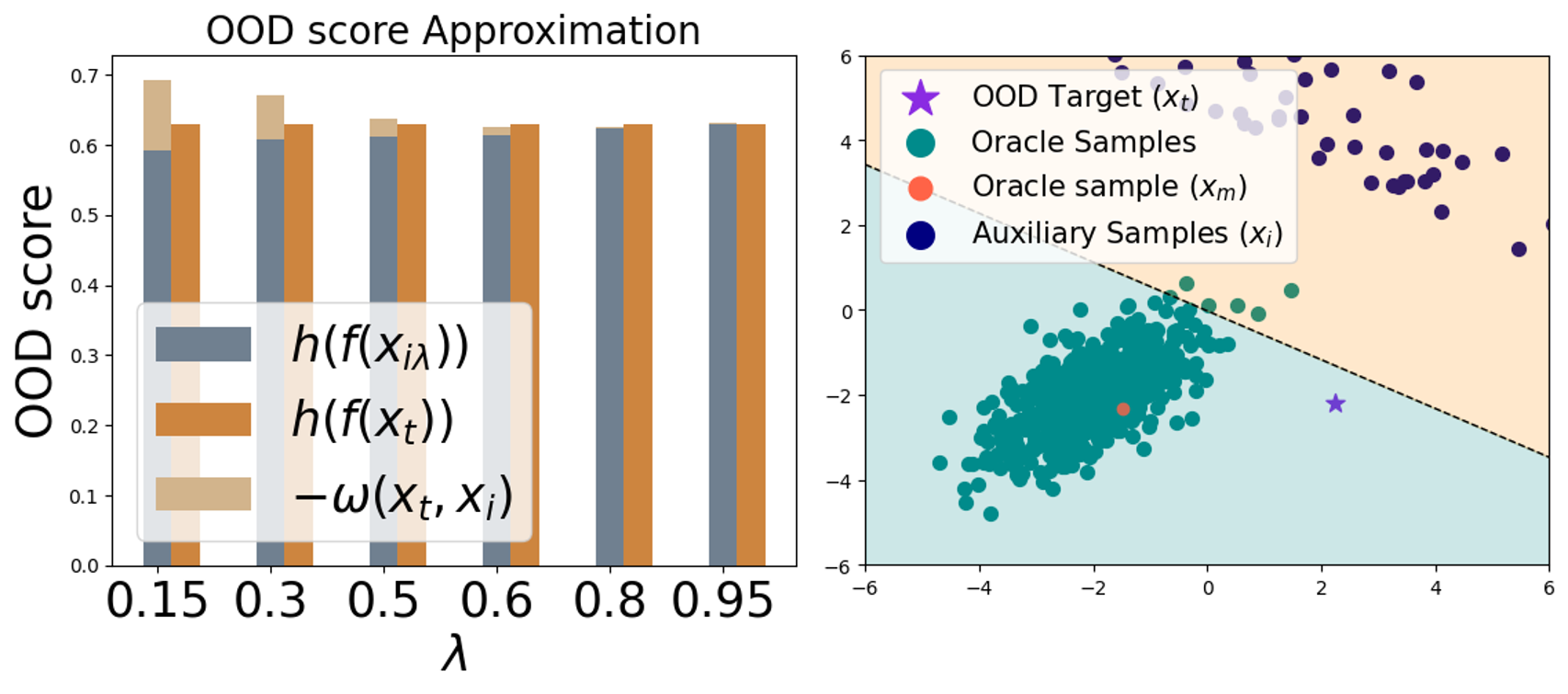}
    \caption{MixDiff+Entropy}
    \label{fig:entropy_theory}
\end{subfigure}\hfill
\begin{subfigure}[b]{0.49\textwidth}
    \centering
    \includegraphics[width=\textwidth]{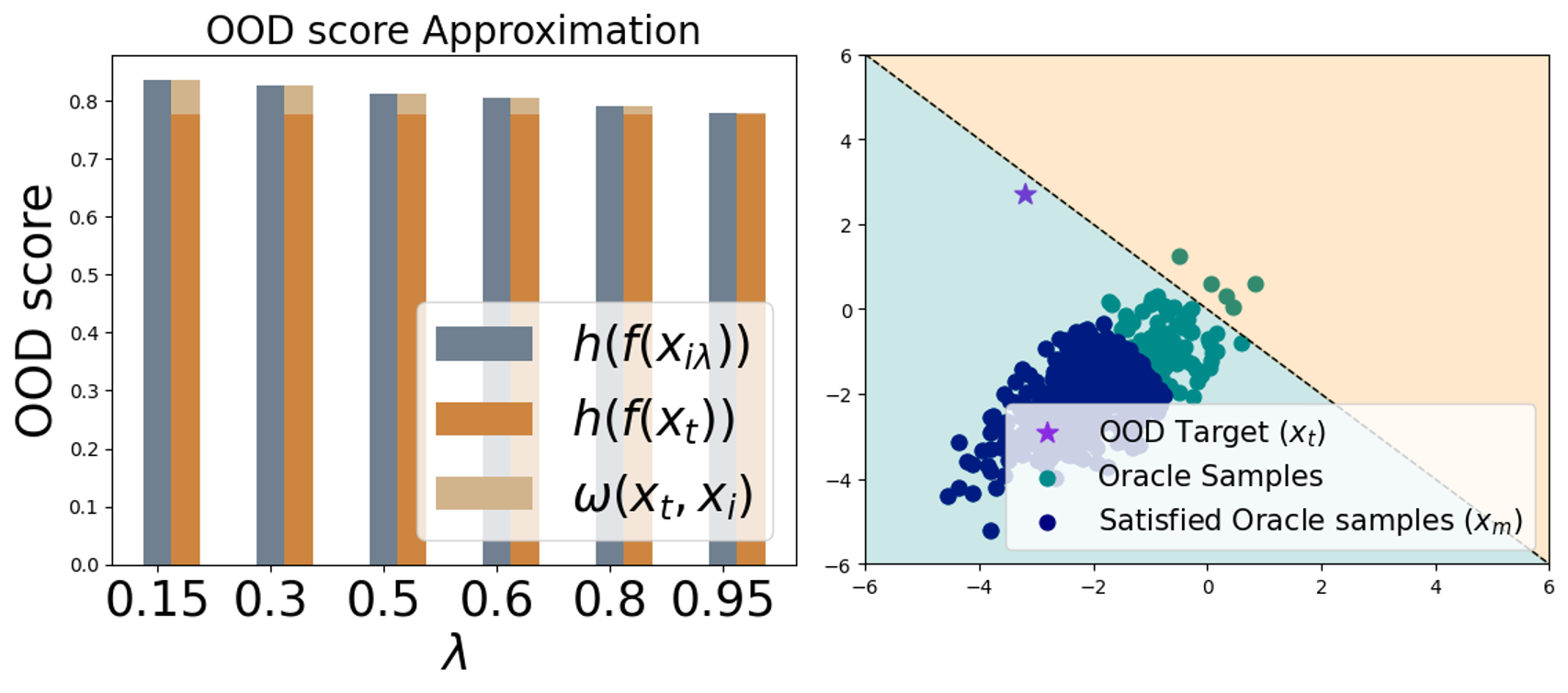}
    \caption{MixDiff+MLS}
    \label{fig:mls_theory}
\end{subfigure}
\caption{On the left of (a) and (b), the OOD scores of a mixed target sample are compared with the approximated OOD scores. On the right of (a) and (b), auxiliary samples are shown along with the oracle samples that guarantee MixDiff is positive.}
\end{figure}

We experimentally verify that Proposition \ref{ood_func_mixup_app} and Theorem \ref{theorem_mixdiff_app} hold when the base OOD score functions are Entropy and MLS, respectively. We conduct verification experiments on a synthetic dataset consisting of 2-dimensional features following the same setup as in Section 3.1 of the main paper. On the right side of Figure \ref{fig:entropy_theory}, we plot the OOD target sample, an oracle sample that has the same class as the predicted class of the target sample, and auxiliary samples that satisfy the condition that makes MixDiff positive. On the right side of Figure \ref{fig:mls_theory}, we show a single target sample along with the oracle samples that satisfy the condition for having a positive MixDiff score. The assumption of binary classification with a linear model eliminates the effect of auxiliary samples.

\section{Verification experiment of the main motivation}\label{sec:veri_exp}

Our primary hypothesis is that overemphasized features are more susceptible to perturbations compared to the features that actually belong to the predicted class. To test this hypothesis, we utilize class activation map (CAM) \citep{Chen_2022_ACCV} to observe the changes in the model's attention areas before and after Mixup operation.

\begin{figure}[t]
    \centering
    \includegraphics[width=0.6\textwidth]{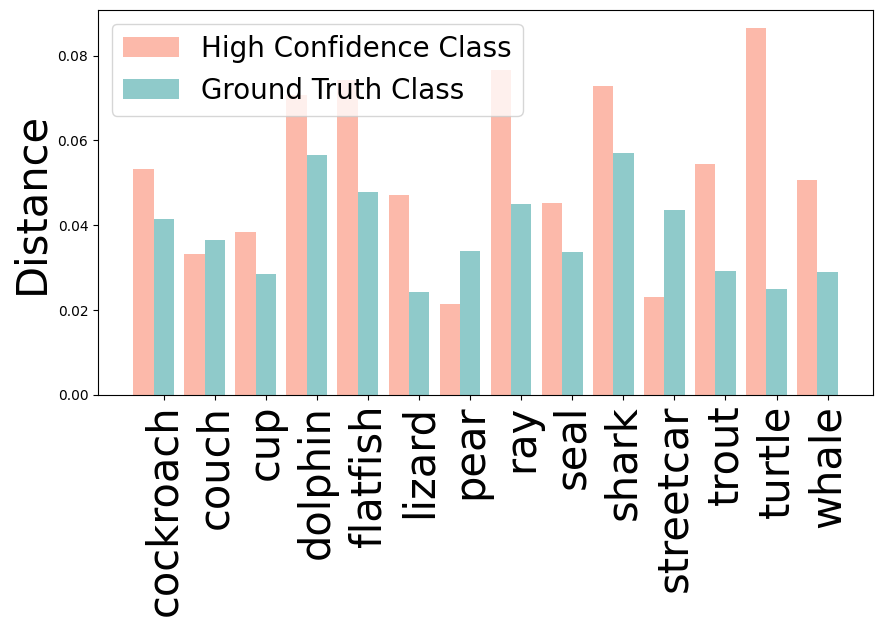}
    \caption{The average pixel-wise difference of the CAM images is measured before and after mixing the OOD samples with an auxiliary sample. The high confidence class and the ground truth class represent the classes used for the prompts in CAM. We measure the fluctuations in the areas of the model focus when the OOD samples are perturbed by arbitrary signals such as mixing with an auxiliary sample.}
    \label{fig:cam_diff}
\end{figure}

We conduct the experiment using CLIP ViT-B/32 and follow the same settings as in the OOD detection experiments on CIFAR100. We first collect OOD samples that are misclassified as ID by the MSP score with a threshold set by TPR95. This set contains samples for which the model has exhibited high confidence. We then filter these samples to include only those classes with at least five samples per class. For each sample, an auxiliary sample for Mixup was randomly selected from an ID class, excluding the class with the highest confidence for that sample.

We compare the CAMs of the high-confidence OOD samples before and after Mixup. The CAMs are processed through min-max normalization, and values below 0.8 are clipped to be zero. We then measure the $L_1$ distance between the two CAMs. To observe the difference in CAMs before and after Mixup for the predicted high-confidence class, we use the text prompt of the class predicted by the model. Similarly, we measure the difference in CAMs of the ground truth classes under Mixup operation with the text prompt of the ground truth class of the sample. The use of ground truth class is to eliminate the effect of sample-wise scale differences. For example, some samples have a large or small object area compared to others.

Figure \mbox{\ref{fig:cam_diff}} compares the average distance in CAMs for each class, considering the prompts as either a high confidence class or a ground truth class. A smaller distance implies less variation due to perturbation, suggesting that the features that the model focuses on are highly relevant to the respective class. On the other hand, a larger distance indicates a greater variation due to perturbation, which could mean that the features that the model focuses on are either less relevant to the class or incorrectly identified as relevant features.
The results in Figure \mbox{\ref{fig:cam_diff}} indicate how perturbations can be used to assess the reliability of the features that lead to a high level of confidence in the input predictions of the model.

\section{Practical implementation}
For each target sample $x_t$, MixDiff generates $N \times R$ mixed samples. Similarly, it generates $N \times R$ mixed samples for each of $M$ oracle samples. If we follow the in-batch setup where the samples that are in the same batch as the target sample are used as the auxiliary samples, MixDiff requires processing of $BNR + BMNR$ mixed samples, denoting the batch size as $B = N+1$.

We avoid $BNR + BMNR$ repeated forward passes by putting each set of the entire Mixup results, including the ones that are mixed with itself, into two tensors of sizes that are prefixed with $(B, B, R)$ and $(B, M, B, R)$, one for the mixed images of targets and auxiliary samples, the other for the mixed images of oracles and auxiliary samples, respectively. After computing the yet-to-be-averaged MixDiff scores within a tensor of size $(B, B, R)$, we zero out the diagonal entries in the first two dimensions, $(B, B)$, eliminating the scores from the target images that are mixed with itself. Then, we take the average of the last two dimensions, $(B, R)$, yielding $B$ MixDiff scores for each of the $B$ test samples.

We also note that in practice the set of $\Bar{O}^{*}_{ir}$ prediction scores corresponding to the oracle samples mixed with the other in-batch samples do not need to be computed for every single test batch. One can use a fixed set of samples as an auxiliary set and precompute each of the mixed oracle logits $\Bar{O}^{*}_{ir}$ by mixing these samples with the oracle samples. When a test batch arrives, each of the samples in the batch will then be independently mixed with these fixed auxiliary samples. Not only does it reduce the compute cost, there is no dependency on the test batch size in regard to OOD detection performance, since the auxiliary samples are no longer drawn from the test batch.

\section{Experimental details} \label{app:exp_detail}
\subsection{Experimental setup}
We evaluate MixDiff within the setting where the class names of OOD samples and the OOD labels are unavailable at train time. This is a more challenging experimental setting compared to the environment where the OOD class names or its instances are known during the training phase. We follow the same setup as in \citet{esmaeilpour2022zero}, and evaluate our method on five OOD detection benchmark datasets: CIFAR10 \citep{krizhevsky2009learning}, CIFAR100 \citep{krizhevsky2009learning}, CIFAR+10 \citep{miller2021class}, CIFAR+50 \citep{miller2021class}, TinyImageNet \citep{le2015tiny}.

Each dataset's ID and OOD (known and unknown) class splits are constructed as follows.
\textbf{CIFAR10}: the dataset's 10 classes are randomly split into 6 ID classes and 4 OOD classes.
\textbf{CIFAR100}: consecutive 20 classes are assigned to be ID classes and the remaining 80 classes are assigned to be OOD classes.
\textbf{CIFAR+10}: 4 non-animal classes of CIFAR10 are ID classes, 10 randomly sampled animal classes from CIFAR100 are OOD classes.
\textbf{CIFAR+50}: 4 non-animal classes of CIFAR10 are ID classes, 50 randomly sampled animal classes from CIFAR100 are OOD classes.
\textbf{TinyImageNet}: considers 20 randomly sampled classes as ID classes and the remaining 180 classes as OOD classes.

For CIFAR10, CIFAR+10, CIFAR+50 and TinyImageNet, we follow the same ID, OOD class splits as in \citet{miller2021class, esmaeilpour2022zero}. For CIFAR100, we use the same class splits as in \citet{esmaeilpour2022zero}. Each dataset contains 5 splits, except for CIFAR+50, which is consisted of only one ID, OOD class split. Figure \ref{fig:five_datasets} shows each method's average AUROC scores averaged over the five datasets. The setup takes logits as model outputs. All of the results for non-training-free methods are from \citet{esmaeilpour2022zero} except for ZOC and MixDiff+ZOC.

We utilize CLIP's \citep{radford2021learning} zero-shot classification capability for OOD detection. More specifically, we compute the similarity score for each ID class label's prompt, \texttt{"This is a photo of \{label\}"}, with the target image, and use these as logits. Since OOD class labels are not known a priori, this forms a valid experimental setup even though the CLIP model is performing zero-shot classification task \citep{esmaeilpour2022zero, mingdelving, wang2023clipn}.

\begin{wrapfigure}{R}{0.4\textwidth}
    \centering
    \includegraphics[width=1.0\linewidth]{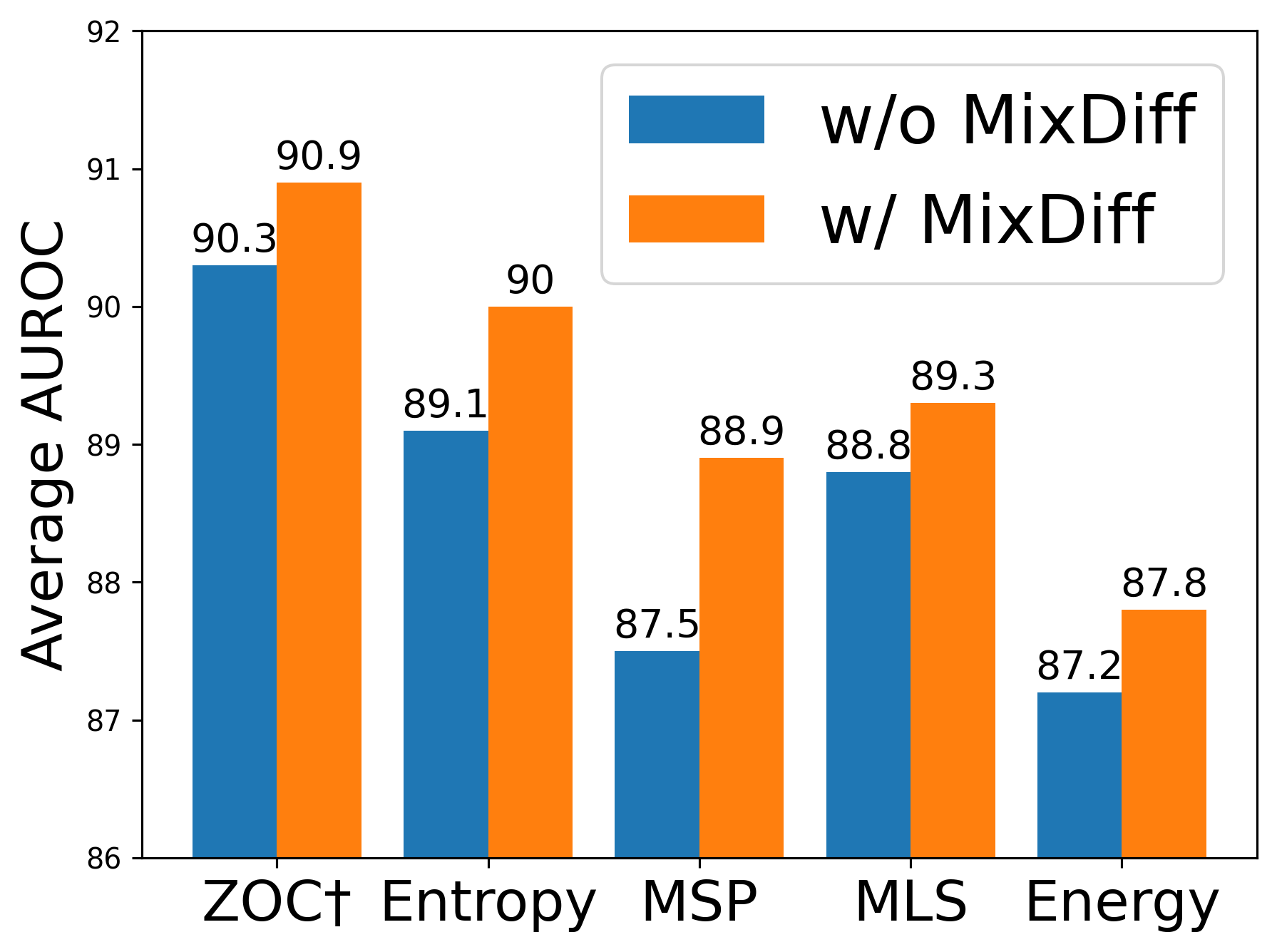}
    \caption{AUROC scores averaged over the five datasets.}
    \label{fig:five_datasets}
\end{wrapfigure}

\subsection{Evaluation metrics}
We compare our method with the baseline methods using the metrics that are commonly employed for OOD detection tasks. \textbf{AUROC} denotes area under the receiver operating characteristic where the receiver operating characteristic represents the relationship between false positive rate (FPR) and true positive rate (TPR) for all of the threshold range. \textbf{FPR95} denotes the false positive rate when the threshold satisfies 95\% TPR. \textbf{AUCPR} represents area under the curve of precision and recall. It is a useful performance measure, especially with an imbalanced dataset. For AUCPR, we set the detection threshold to be the value that satisfies 95\% TPR. We consider OOD samples as positive.

\subsection{Hyperparameter search on Caltech101}
We construct each known-unknown class split for Caltech101 dataset \citep{fei2004learning} by randomly sampling 20 classes as ID, and setting aside the rest as OOD, making a total of 3 splits. We conduct grid search over the following hyperparameter configurations: $M \in \{15, 10\}$, $N \in \{14, 9\}$, $R \in \{7, 5\}$, $\gamma \in \{2.0, 1.0, 0.5\}$. We use the numbers that evenly divide the interval $[0, 1]$ into $R+1$ segments as the values of the Mixup ratios. For example, when $R = 3$, the set of Mixup ratios is $\{0.25, 0.5, 0.75\}$. We select the configuration with the highest average AUROC score for each method. For the environment where the model outputs are the logits, the resulting hyperparameters are $M=15$, $N=14$, $R=7$, and $\gamma=2$ for all methods.

\sloppy For Entropy+MSP linear combination baseline, we tune the scaling factor $\eta = b \times 10^{a}$, by conducting grid search over the following configurations: $a \in \{ -4, -3, -2, -1, 0, 1, 2, 3, 4\}$, $b \in \{1, 2, 3, 4, 5, 6, 7, 8, 9\}$ and the score to which $\eta$ is multiplied (MSP or Entropy).

We conduct hyperparameter search for ASH \citep{djurisic2023extremely} over the pruning percentile $p \in \{ 10, 20, 30, 40, 50, 60, 70, 80, 90\}$ and 3 treatment methods of unpruned activations, namely, ASH-P, ASH-B, ASH-S, on Caltech101. The same CLIP ViT-B/32 \citep{radford2021learning} backbone is employed for zero-shot classification and the entropy score is utilized as the OOD scoring function.

We conduct hyperparameter search over the DML's \citep{Zhang_2023_CVPR} scaling ratio $\lambda \in \{ 0.01, 0.1, 1.0, 2.0, 5.0, 10.0, 30.0, 60.0, 100.0, 300.0, 500.0, 1000.0 \}$ on the Caltech101 and use the best performing value in terms of AUROC when evaluating on the other datasets. The same CLIP ViT-B/32 \citep{radford2021learning} backbone is utilized without any finetuning on ID samples.

\subsection{Adaptation of MixDiff with ZOC}\label{app:zoc}
ZOC \citep{esmaeilpour2022zero} utilizes a candidate OOD class name generator. MixDiff framework is applied to ZOC by averaging out each of the perturbed images' candidate OOD logits as follows: $ \log (\frac{1}{C} \sum_{i=1}^{C} \exp{(o_i)})$ where $C$ and $o_i$ are the number of generated OOD class names from the image and the $i$-th OOD class logit, respectively. This effectively means that the logits in the perturbed oracle and target samples' outputs have a dimension of $K+1$ instead of $K$ in the following equations: $O^{*}_{mir} = f(x^{*}_{mir}) \in \mathbb{R}^K$ and $O_{ir} = f(x_{ir}) \in \mathbb{R}^K$. 

We evaluate on 200 randomly chosen samples per split as ZOC’s token generation module requires a large amount of computation to process the entire set of mixed images. Also, the hyperparameters were tuned on each of the target datasets to alleviate variability issues.

\subsection{Experimental details on adversarial defence task} \label{sec:adv_detail}
We take the same experimental setup as the OOD detection experiments with identical datasets and backbone model. We use projected gradient descent (PGD) attack \mbox{\citep{madry2018towards}} with $L_\infty$ norm perturbation bound, adversarial budget $\epsilon = \frac{1}{255}$ and attack step size of 10. We assume access to the model parameters for the attacker, so that the true gradients can be calculated. Following \mbox{\citet{chen2022robust}}, cross entropy with the uniform distribution is used as the loss function when attacking ID samples, and Shannon entropy is used as the loss function when attacking OOD samples. We use the other oracle samples that are of the same class as the predicted label of the target as auxiliary samples, referred to as oracle as auxiliary in the main paper. We use the same hyperparameters that are found in OOD detection task without separate hyperparameter tuning on adversarial defence task. Figure \ref{fig:adv_step_size} shows AUROC scores for various attack step sizes.

\begin{figure}
\begin{subfigure}[b]{1.0\textwidth}
\centering
\resizebox{0.4\textwidth}{!}{%
    \begin{tikzpicture}
    \begin{axis}[
        xlabel={Attack Step Size},
        ylabel={AUROC},
        ymax=85,
        ymin=0,
        legend pos=north east,
        ymajorgrids=true,
        grid style=dashed,
    ]
    
    \addplot[
        color=blue,
        mark=*,
        ]
        coordinates {
        (0,79.87)
        (1,29.26)
        (5,5.236)
        (10,2.213)
        (20,1.712)
        (50,1.371)
        (100,1.34)
        };
        \addlegendentry{Entropy}
        
    \addplot[
        color=orange,
        mark=*,
        ]
        coordinates {
        (0,81.11)
        (1,39.34)
        (5,14.95)
        (10,9.084)
        (20,8.074)
        (50,7.467)
        (100,7.569)
        };
        \addlegendentry{MixDiff+Entropy}
        
    \addplot[
        color=red,
        mark=*,
        ]
        coordinates {
        (0,78.05)
        (1,49.54)
        (5,34.16)
        (10,27.48)
        (20,27.02)
        (50,26.57)
        (100,26.79)
        };
        \addlegendentry{MixDiff Only}
    \end{axis}
    \end{tikzpicture}
}
\caption{Both ID and OOD samples are attacked.}
\label{fig:adv_step_size_both}
\end{subfigure}

\hspace{0.25cm}

\begin{subfigure}[b]{1.0\textwidth}
\centering
\resizebox{0.4\textwidth}{!}{%
    \begin{tikzpicture}
    \begin{axis}[
        xlabel={Attack Step Size},
        ylabel={AUROC},
        ymax=85,
        ymin=0,
        legend pos=north east,
        ymajorgrids=true,
        grid style=dashed,
    ]
    
    \addplot[
        color=blue,
        mark=*,
        ]
        coordinates {
        (0,79.87)
        (1,60.32)
        (5,44.9)
        (10,36.86)
        (20,34.33)
        (50,33.4)
        (100,33.78)
        };
        \addlegendentry{Entropy}
        
    \addplot[
        color=orange,
        mark=*,
        ]
        coordinates {
        (0,81.11)
        (1,66.21)
        (5,56.73)
        (10,50.31)
        (20,49.13)
        (50,48.77)
        (100,49.25)
        };
        \addlegendentry{MixDiff+Entropy}
        
    \addplot[
        color=red,
        mark=*,
        ]
        coordinates {
        (0,78.05)
        (1,67.65)
        (5,63.1)
        (10,58.84)
        (20,58.58)
        (50,58.69)
        (100,58.98)
        };
        \addlegendentry{MixDiff Only}
    \end{axis}
    \end{tikzpicture}
}
\caption{ID samples are attacked.}
\label{fig:adv_step_size_id2ood}
\end{subfigure}

\hspace{0.25cm}

\begin{subfigure}[b]{1.0\textwidth}
\centering
\resizebox{0.4\textwidth}{!}{%
    \begin{tikzpicture}

    \begin{axis}[
        xlabel={Attack Step Size},
        ylabel={AUROC},
        ymax=85,
        ymin=0,
        legend pos=north east,
        ymajorgrids=true,
        grid style=dashed,
    ]
    
    \addplot[
        color=blue,
        mark=*,
        ]
        coordinates {
        (0,79.87)
        (1,51.95)
        (5,22.31)
        (10,14.38)
        (20,12.47)
        (50,11.08)
        (100,10.76)
        };
        \addlegendentry{Entropy}
        
    \addplot[
        color=orange,
        mark=*,
        ]
        coordinates {
        (0,81.11)
        (1,56.92)
        (5,36.55)
        (10,31.4)
        (20,30.48)
        (50,29.72)
        (100,29.46)
        };
        \addlegendentry{MixDiff+Entropy}
        
    \addplot[
        color=red,
        mark=*,
        ]
        coordinates {
        (0,78.05)
        (1,59.37)
        (5,47.68)
        (10,44.19)
        (20,43.8)
        (50,43.4)
        (100,43.17)
        };
        \addlegendentry{MixDiff Only}
    \end{axis}
    \end{tikzpicture}
}
\caption{OOD samples are attacked.}
\label{fig:adv_step_size_ood2id}
\end{subfigure}

\caption{AUROC scores for various attack step sizes on CIFAR100. \textbf{(a)} Both ID and OOD samples are adversarially attacked.  \textbf{(b)} ID samples are adversarially attacked. \textbf{(c)} OOD samples are adversarially attacked.}
\label{fig:adv_step_size}
\end{figure}
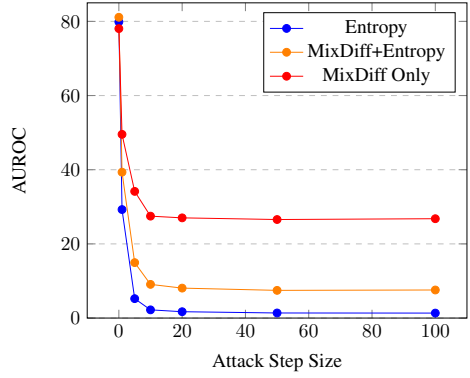
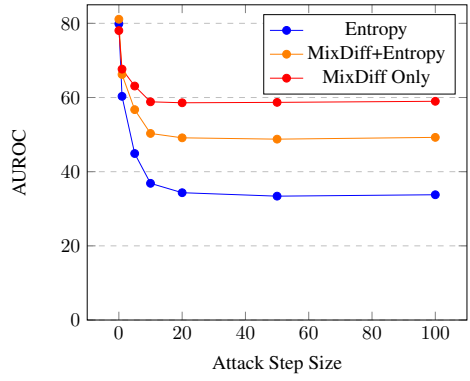
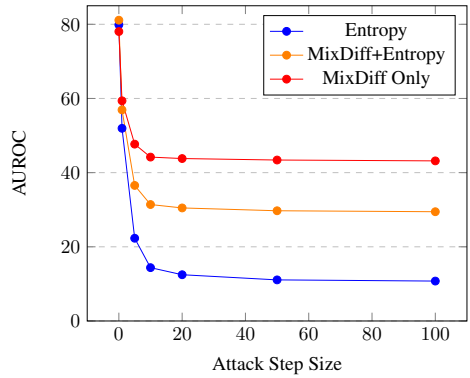

\subsection{Experimental details on out-of-scope detection task} \label{sec:text_mixup}
We run out-of-scope detection experiments using 4 intent classification datasets. CLINC150 \citep{larson-etal-2019-evaluation} dataset is consisted of samples spanning across 10 general domains including \texttt{"utility"} and \texttt{"travel"}, with each sample belonging to one of 150 intent classes. Banking77 \citep{Casanueva2020} is a dataset specializing in banking domain and has 77 intent classes. ACID \citep{acharya2020using} is an intent detection dataset with 175 intents, consisted of samples of customers contacting an insurance company. TOP \citep{gupta-etal-2018-semantic-parsing} is a dataset with the intents organized in a hierarchical structure and is consisted of queries related to navigation and event. For TOP dataset, we use the root node's intent as the intent label for the query, as in \citet{yilmaz-toraman-2022-d2u}.

\sloppy For CLINC150 and TOP datasets, we keep the original OOS intents in the OOS split. More specifically, CLINC150 dataset's \texttt{"oos"} class and TOP dataset's intent classes that are prefixed with \texttt{"IN:UNSUPPORTED"} \citep{yilmaz-toraman-2022-d2u}. We also set aside 4 intents in TOP dataset that have too small number of samples to be reliably split into train and validation sets as OOS intents. These intents are \texttt{"IN:GET\_EVENT\_ATTENDEE"}, \texttt{"IN:UNINTELLIGIBLE"}, \texttt{"IN:GET\_EVENT\_ORGANIZER"}, and \texttt{"IN:GET\_EVENT\_ATTENDEE\_AMOUNT"}. This leaves the dataset with 12 original in-scope intent classes, excluding the OOS intent classes.
 
We further split the train set of the in-scope samples into more in-scope, OOS splits and use these to search MixDiff's hyperparameters. To assume an environment where the test time in-scope ratio is unknown, we evaluate OOS detection performance on multiple inner in-scope ratios, 25\%, 50\%, 75\%, for each inner split. We leave out the splits with the number of inner in-scope intents less than 2. An intent classification model is trained for each of these inner in-scope splits. After training, we perform OOS detection on the outer in-scope validation set and select the hyperparameter set with the highest average AUROC score.

For a given oracle sample, we use the other oracle samples in the same class as the auxiliary samples. For ease of comparison between the logit-based and probability-based OOD scoring functions, we take the setup where the model $f(\cdot)$'s outputs are in the logit space for both cases.

We explore three configurations with respect to the position of the auxiliary sample in a concatenated text pair: (1) prepending the auxiliary sample at the front of an oracle or the target sample; (2) appending the auxiliary sample at the end of an oracle or the target sample; (3) a combination of both, analogous to the setting of 2 Mixup ratios in image Mixup ($R=2$). We conduct grid search over the following hyperparameters: $M \in \{ 5, 10, 15, 20, 25, 30 \}$, $\gamma \in \{ 0.5, 1.0, 2.0\}$, and three auxiliary sample concatenation methods as described above. We note that the number of auxiliary samples is determined as $N = M - 1$, since we use the other oracle samples in the same class as the auxiliary samples. We provide the average AUROC scores for each in-score ratio in Table \ref{tab:text_auroc_full}.

\begin{table}[H]
\centering
\resizebox{1.0\textwidth}{!}
{
\begin{tabular}{lcccccc}
\toprule
Method                                                      & In-scope ratio    & CLINC150  
                                                                                                           & Banking77                 & ACID                        & TOP                            & Average        \\ \midrule
\multirow{4}{*}{MSP \citep{hendrycks2017a}}                  & 25\%             & 93.07\scriptsize{$\pm1.8$} & 84.29\scriptsize{$\pm3.7$} &  89.39\scriptsize{$\pm1.6$}  & 93.68\scriptsize{$\pm4.5$}  & 90.11  \\ 
                                                            & 50\%             & 93.26\scriptsize{$\pm0.6$} & 85.80\scriptsize{$\pm$3.2} &  88.61\scriptsize{$\pm$1.3}  & 88.90\scriptsize{$\pm$5.2}  & 89.14  \\
                                                            & 75\%             & 92.74\scriptsize{$\pm0.8$} & 86.20\scriptsize{$\pm$3.6} &  88.93\scriptsize{$\pm$1.8}  & 87.44\scriptsize{$\pm$7.0}  & 88.83  \\ 
                                                            & Avg.             & 93.02                      & 85.43                      &  88.98                       & 90.01                       & 89.36  \\ \midrule
                                                            
\multirow{4}{*}{MLS \citep{Hendrycks2022ScalingOD}}          & 25\%             & 93.06\scriptsize{$\pm$2.0} & 83.01\scriptsize{$\pm$3.8} &  88.96\scriptsize{$\pm$1.4}  & 93.10\scriptsize{$\pm$4.5}  & 89.53  \\
                                                            & 50\%             & 93.77\scriptsize{$\pm$0.6} & 85.63\scriptsize{$\pm$3.2} &  88.77\scriptsize{$\pm$1.0}  & 88.30\scriptsize{$\pm$6.2}  & 89.12  \\
                                                            & 75\%             & 93.85\scriptsize{$\pm$0.8} & 86.43\scriptsize{$\pm$3.8} &  89.00\scriptsize{$\pm$1.5}  & 88.77\scriptsize{$\pm$6.1}  & 89.51  \\ 
                                                            & Avg.             & 93.56                      & 85.02                      &  88.91                       & 90.06                       & 89.39  \\ \midrule
                          
\multirow{4}{*}{Energy \citep{liu2020energy}}                & 25\%             & 93.09\scriptsize{$\pm$2.1} & 82.96\scriptsize{$\pm$3.8} &  88.87\scriptsize{$\pm$1.4}  & 93.10\scriptsize{$\pm$4.5}  & 89.51  \\
                                                            & 50\%             & 93.82\scriptsize{$\pm$0.6} & 85.64\scriptsize{$\pm$3.2} &  88.70\scriptsize{$\pm$1.0}  & 88.30\scriptsize{$\pm$6.2}  & 89.12  \\
                                                            & 75\%             & 93.91\scriptsize{$\pm$0.8} & 86.36\scriptsize{$\pm$3.8} &  88.93\scriptsize{$\pm$1.5}  & 88.78\scriptsize{$\pm$6.1}  & 89.50  \\ 
                                                            & Avg.             & 93.61                      & 84.99                      &  88.83                       & 90.06                       & 89.37  \\ \midrule
                                                            
\multirow{4}{*}{Entropy \citep{Thulasidasan2021AnEB}}  & 25\%             & 93.23\scriptsize{$\pm$1.8} & 84.28\scriptsize{$\pm$3.8} &  89.27\scriptsize{$\pm$1.6}  & 93.68\scriptsize{$\pm$4.5}  & 90.12  \\
                                                            & 50\%             & 93.52\scriptsize{$\pm$0.6} & 86.02\scriptsize{$\pm$3.3} &  88.53\scriptsize{$\pm$1.2}  & 88.91\scriptsize{$\pm$5.2}  & 89.25  \\
                                                            & 75\%             & 93.11\scriptsize{$\pm$0.8} & 86.48\scriptsize{$\pm$3.8} &  88.81\scriptsize{$\pm$1.7}  & 87.46\scriptsize{$\pm$7.0}  & 88.97  \\ 
                                                            & Avg.             & 93.29                      & 85.59                      &  88.87                       & 90.02                       & 89.44  \\ \midrule
                                                            
\multirow{4}{*}{MixDiff+MSP}                                & 25\%             & 93.57\scriptsize{$\pm$1.7} & 84.77\scriptsize{$\pm$3.6} &  89.66\scriptsize{$\pm$1.6}  & 93.68\scriptsize{$\pm$4.6}  & \underline{90.42}  \\
                                                            & 50\%             & 93.57\scriptsize{$\pm$0.6} & 86.11\scriptsize{$\pm$3.0} &  88.77\scriptsize{$\pm$1.2}  & 89.65\scriptsize{$\pm$4,6}  & \underline{89.53}  \\
                                                            & 75\%             & 93.12\scriptsize{$\pm$0.8} & 86.36\scriptsize{$\pm$3.4} &  89.10\scriptsize{$\pm$1.6}  & 88.71\scriptsize{$\pm$6.1}  & 89.32  \\  
                                                            & Avg.             & 93.42                      & \underline{85.75}              &  \underline{89.18}               & \textbf{90.68}              & \underline{89.76}  \\ \midrule
                    
\multirow{4}{*}{MixDiff+MLS}                                & 25\%             & 93.57\scriptsize{$\pm$2.0} & 83.56\scriptsize{$\pm$3.7} &  89.37\scriptsize{$\pm$1.4}  & 93.16\scriptsize{$\pm$4.4}  & 89.92  \\
                                                            & 50\%             & 94.02\scriptsize{$\pm$0.6} & 86.02\scriptsize{$\pm$3.3} &  89.01\scriptsize{$\pm$1.0}  & 88.84\scriptsize{$\pm$5.8}  & 89.47  \\
                                                            & 75\%             & 94.04\scriptsize{$\pm$0.7} & 86.81\scriptsize{$\pm$3.6} &  89.33\scriptsize{$\pm$1.3}  & 89.04\scriptsize{$\pm$6.0}  & \textbf{89.81}  \\
                                                            & Avg.             & \underline{93.88}              & 85.46                      &  \textbf{89.24}              & \underline{90.35}               & 89.73  \\ \midrule
                          
\multirow{4}{*}{MixDiff+Energy}                             & 25\%             & 93.59\scriptsize{$\pm$2.0} & 83.51\scriptsize{$\pm$3.7} &  89.28\scriptsize{$\pm$1.4}  & 93.17\scriptsize{$\pm$4.4}  & 89.89  \\
                                                            & 50\%             & 94.01\scriptsize{$\pm$0.6} & 86.27\scriptsize{$\pm$2.9} &  88.95\scriptsize{$\pm$1.0}  & 88.83\scriptsize{$\pm$5.8}  & 89.52  \\
                                                            & 75\%             & 94.07\scriptsize{$\pm$0.8} & 86.74\scriptsize{$\pm$3.7} &  89.32\scriptsize{$\pm$1.3}  & 89.05\scriptsize{$\pm$6.0}  & \underline{89.80}  \\
                                                            & Avg.             & \textbf{93.89}             & 85.51                      &  \underline{89.18}               & \underline{90.35}               & 89.73  \\ \midrule
                      
\multirow{4}{*}{MixDiff+Entropy}                            & 25\%             & 93.70\scriptsize{$\pm$1.7} & 84.79\scriptsize{$\pm$3.7} &  89.55\scriptsize{$\pm$1.6}  & 93.70\scriptsize{$\pm$4.5}  & \textbf{90.44}  \\
                                                            & 50\%             & 93.84\scriptsize{$\pm$0.6} & 86.42\scriptsize{$\pm$3.1} &  88.74\scriptsize{$\pm$1.2}  & 89.65\scriptsize{$\pm$4.7}  & \textbf{89.66}  \\
                                                            & 75\%             & 93.48\scriptsize{$\pm$0.8} & 86.74\scriptsize{$\pm$3.6} &  89.09\scriptsize{$\pm$1.5}  & 88.68\scriptsize{$\pm$6.2}  & 89.50  \\ 
                                                            & Avg.             & 93.67                      & \textbf{85.98}             &  89.13                       & \textbf{90.68}              & \textbf{89.87}  \\ \bottomrule
                        
\end{tabular}%
}
\caption{Average AUROC scores for out-of-scope detection task. The numbers on the right side of $\pm$ represent standard deviation. The numbers in the "Average" column are the average AUROC scores reported in that row. The numbers in a "Avg." row are the average of the AUROC scores reported in that column. The highest and second highest average AUROC scores are highlighted with \textbf{bold} and \ul{underline}, respectively.}
\label{tab:text_auroc_full}
\end{table}

\section{Performance evaluation with AUCPR and FPR95}
Table \ref{tab:oth_metrics} presents a comprehensive performance analysis of MixDiff in relation to other baselines, utilizing commonly employed metrics for OOD detection studies. Our findings show that MixDiff can boost OOD detection performance in FPR95 and AUCPR as well as AUROC.

\begin{table}[H]
\centering
\resizebox{0.7\columnwidth}{!}
{%
\begin{tabular}{lcccc} \toprule
Method                      & Training-free & AUROC ($\uparrow$)                     & FPR95 ($\downarrow$) & AUCPR ($\uparrow$) \\ \midrule
ZOC $^\dagger$              & \ding{55}     & 82.7\scriptsize{$\pm$2.8}             & \textbf{64.0}\scriptsize{$\pm$6.9}        & 94.1\scriptsize{$\pm$1.0}                   \\
MixDiff+ZOC $^\dagger$      & \ding{55}     & \textbf{82.8}\scriptsize{$\pm$2.4}    & 65.2\scriptsize{$\pm$12.0}               & \textbf{95.0}\scriptsize{$\pm$0.7}                \\ \midrule \midrule
MSP                         & \ding{51}     & 78.2\scriptsize{$\pm$3.1}            & 60.4\scriptsize{$\pm$5.3}                & 91.4\scriptsize{$\pm$1.9}                   \\
MLS                         & \ding{51}     & 80.0\scriptsize{$\pm$3.1}             & 62.3\scriptsize{$\pm$5.2}                & 92.9\scriptsize{$\pm$1.6}                 \\
Energy                      & \ding{51}     & 77.6\scriptsize{$\pm$3.7}             & 65.4\scriptsize{$\pm$4.2}                & 91.9\scriptsize{$\pm$1.9}                  \\
Entropy                     & \ding{51}     & 79.9\scriptsize{$\pm$2.5}             & \underline{58.8}\scriptsize{$\pm$5.2}  & 92.0\scriptsize{$\pm$1.7}                  \\ \midrule
MixDiff+MSP                 & \ding{51}     & 80.1\scriptsize{$\pm$2.8}            & 60.1\scriptsize{$\pm$4.8}               & 92.3\scriptsize{$\pm$1.5}                 \\
MixDiff+MLS                 & \ding{51}     & \underline{80.5}\scriptsize{$\pm$2.2}        &62.5\scriptsize{$\pm$4.1}      & \textbf{92.9}\scriptsize{$\pm$1.2}                  \\
MixDiff+Energy              & \ding{51}     & 78.3\scriptsize{$\pm$2.7}              & 65.9\scriptsize{$\pm$3.4}              & 92.1\scriptsize{$\pm$1.4}                  \\
MixDiff+Entropy             & \ding{51}     & \textbf{81.0}\scriptsize{$\pm$2.6}     & \textbf{58.4}\scriptsize{$\pm$4.8}    & \underline {92.6}\scriptsize{$\pm$1.5} \\ \bottomrule
\end{tabular}
}
\caption{Performance comparison with various metrics.}
\label{tab:oth_metrics}
\end{table}

\section{Comparison with other OOD scoring functions}
We divide the MSP score into five intervals of the same length and plot the difference of the average scores of OOD and ID samples in the same interval. We also plot the difference of the average MixDiff scores of OOD and ID samples belonging to the same MSP score interval. Figure \mbox{\ref{fig:ood_id_diff}} shows that for similar values of MSP score, the uncertainty score from MixDiff among the OOD samples is significantly higher than that of the ID samples. This demonstrates that, even when two ID, OOD samples' MSP scores are almost identical, the MixDiff scores can still provide a discriminative edge.

\begin{figure}[h]
  \centering
  \begin{subfigure}[b]{0.48\linewidth}
    \centering
    \includegraphics[width=0.9\linewidth]{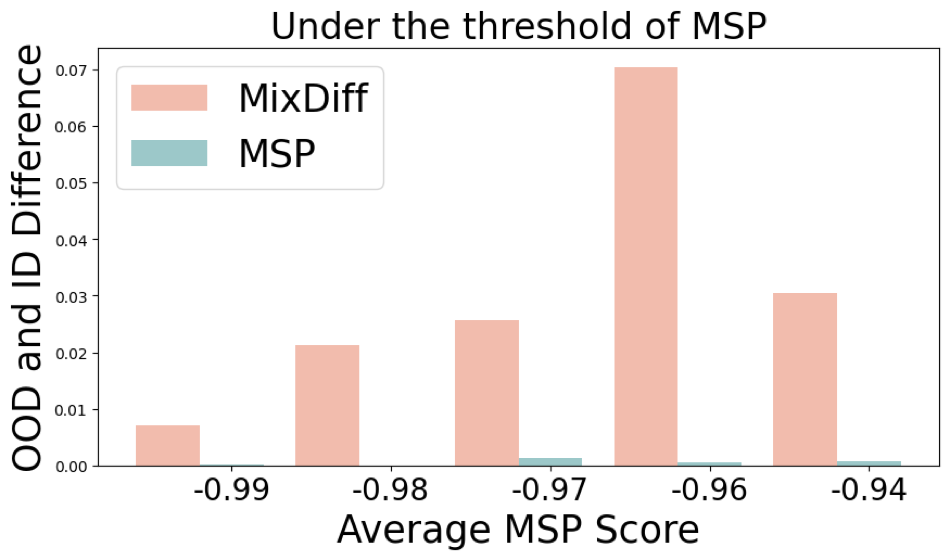} 
    \caption{} 
    \label{fig:interval_over} 
  \end{subfigure}
  \begin{subfigure}[b]{0.48\linewidth}
    \centering
    \includegraphics[width=0.9\linewidth, valign=t]{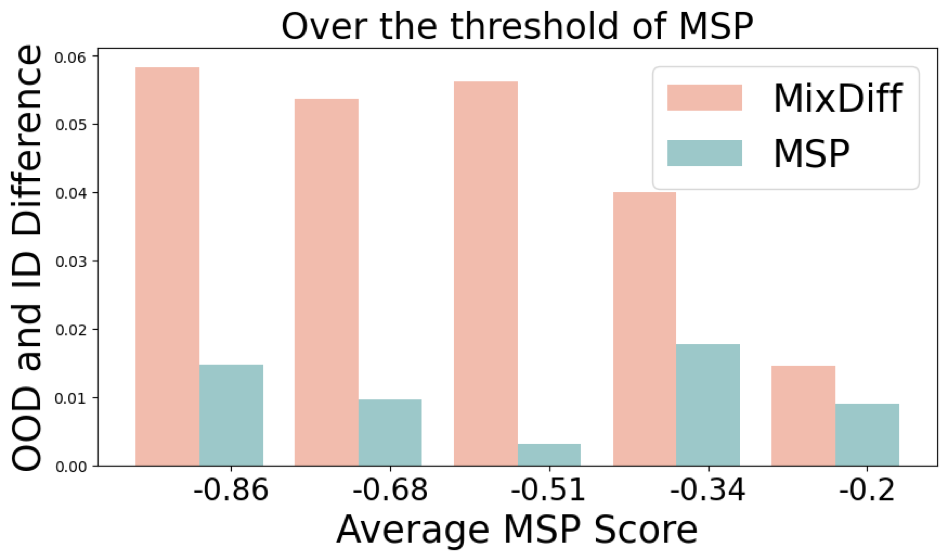} 
    \caption{} 
    \label{fig:interval_under} 
  \end{subfigure}
\caption{Difference between the average uncertainty scores of OOD and ID samples belonging to a given interval of MSP score. The $x$-axis represents the average MSP score of the interval. \textbf{(a)} Intervals under the threshold with the threshold set by TPR95 of MSP. \textbf{(a)} Intervals over the threshold with the threshold set by TPR95 of MSP.}
\label{fig:ood_id_diff}
\end{figure}

To validate whether MixDiff scores have extra information which is not captured by existing other OOD scores, we calculate pair-wise correlation among OOD scores in Figure \ref{fig:interval_over}, and evaluate the error rate of OOD detection by each OOD score in Figure \ref{fig:interval_under}. 

\begin{figure}[h]
  \centering
  \begin{subfigure}[b]{0.4\linewidth}
    \centering
    \includegraphics[width=0.9\linewidth]{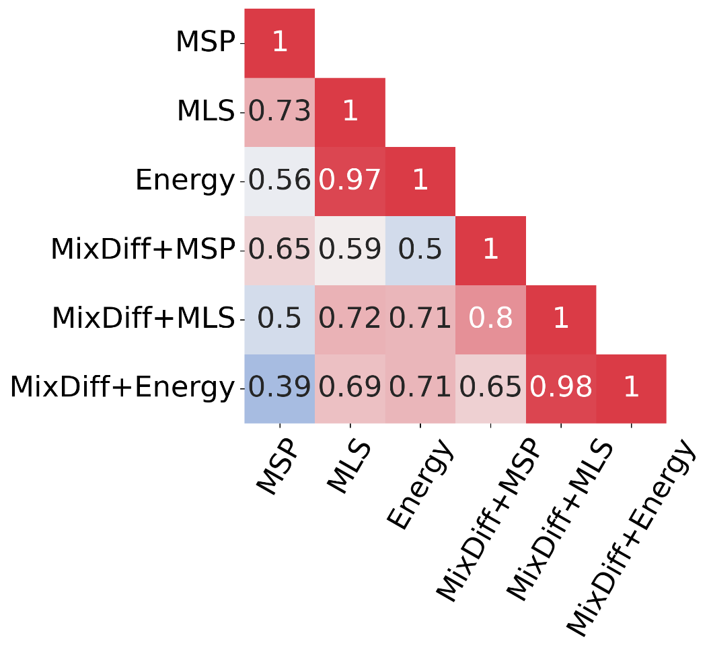} 
    \caption{} 
    \label{fig:qa_figures_a} 
  \end{subfigure}
  \begin{subfigure}[b]{0.4\linewidth}
    \centering
    \includegraphics[width=0.9\linewidth, valign=t]{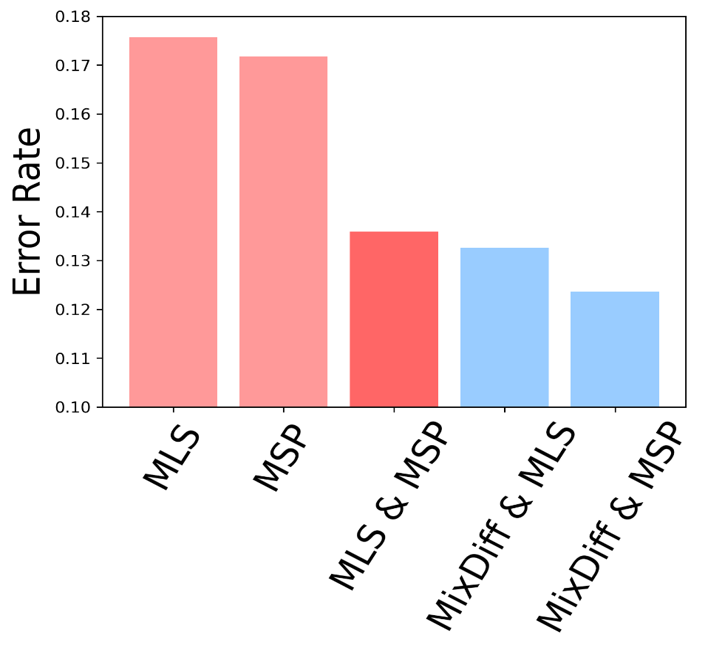} 
    \caption{} 
    \label{fig:qa_figures_b} 
  \end{subfigure}
  \caption{\textbf{(a)} Pearson correlation between the scores of different OOD scoring functions. \textbf{(b)} Error rate at TPR95 for each method. For multiple methods, error means both were incorrect.}
  \label{fig:qa_figures}
\end{figure}

As shown in Figure \ref{fig:qa_figures_a}, MixDiff scores exhibit a weaker correlation with other OOD scores, which implies that MixDiff scores contain additional information that is absent in other scores. Consequently, MixDiff can correct certain wrong decisions of existing methods (verified in Figure \ref{fig:qa_figures_b}), when adopted with them together. These results suggest that the perturb-and-compare approach is helpful for stable OOD detection and MixDiff effectively provides such an advantage. All results from this subsection are derived from CIFAR100 test set.

\section{Computational cost analysis}
Figure \mbox{\ref{fig:auroc_by_rn}} shows AUROC scores of MixDiff+Entropy for various values $R$ and $N$ evaluated on CIFAR100. MixDiff starts to outperform the entropy score with only two additional forward passes ($N=2$, $R=1$). The model outputs from $f(\cdot)$ are prediction probabilities, the number of oracle samples, $M$, is fixed at 15 and the scaling factor $\gamma$ is tuned on Caltech101.

\begin{filecontents*}{filename.txt}
1 1 79.85
1 3 80.46
1 5 80.53
1 7 80.54

2 1 80.34
2 3 80.72
2 5 80.81
2 7 80.83

5 1 80.54
5 3 80.94
5 5 80.95
5 7 80.97

8 1 80.59
8 3 80.96
8 5 81.04
8 7 81.05

11 1 80.63
11 3 81.04
11 5 81.11
11 7 81.11

14 1 80.62
14 3 81.02
14 5 81.07
14 7 81.08

\end{filecontents*}

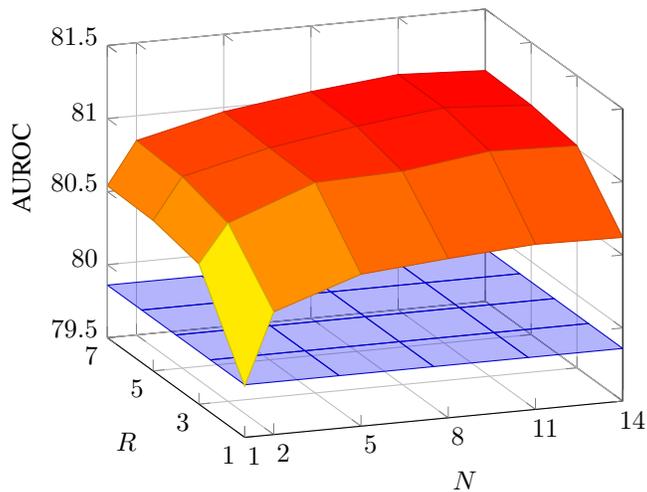
\begin{figure}[h]
\centering
\centering
\begin{tikzpicture}
    \begin{axis}[
        view={-20}{20}, grid=both,
        xlabel = $N$,
        ylabel = $R$,
        zlabel = AUROC,
        xtick = {1,2,5,8,11,14},
        ytick ={1,3,5,7},
        zmin = 79.5, zmax = 81.5
    ]
      
      \addplot3 [
          surf,
          samples=5,
          domain=1:14,
          domain y=1:7,
          fill opacity = 0.3,
      ]
      (x,y,79.86);

      \addplot3[
      surf,
      ]
      file {filename.txt};
    \end{axis}
\end{tikzpicture}
\caption{AUROC scores of MixDiff+Entropy with varying values of $N$ and $R$ (top). AUROC score of Entropy (bottom). Both methods are evaluated on CIFAR100.}
\label{fig:auroc_by_rn}
\end{figure}

\section{Processing time analysis}
We analyze the average time required to process one target sample. Target-side perturbed samples processed in a single batch. We fix the number of oracle samples, $M$, to 15 and use the other oracle samples as the auxiliary samples ($N$=14). This is the same as the oracle as auxiliary setup in the ablation studies portion of the main paper. We precompute the oracle-side perturbed samples. Figure \ref{fig:proc_time} depicts the average processing time against the number of Mixup ratios, $R$. The stagnant increase in processing time contrasted with the rapid increase in performance at small values of $R$ indicates that the additional perturbed samples can be effectively processed in parallel, so that MixDiff's effectiveness can be exploited without incurring a prohibitive processing time. When we allow multiple target samples to be batched together, MixDiff's processing time further decreases (MixDiff BS=100). Experiments are performed with NVIDIA RTX A6000 48GB.

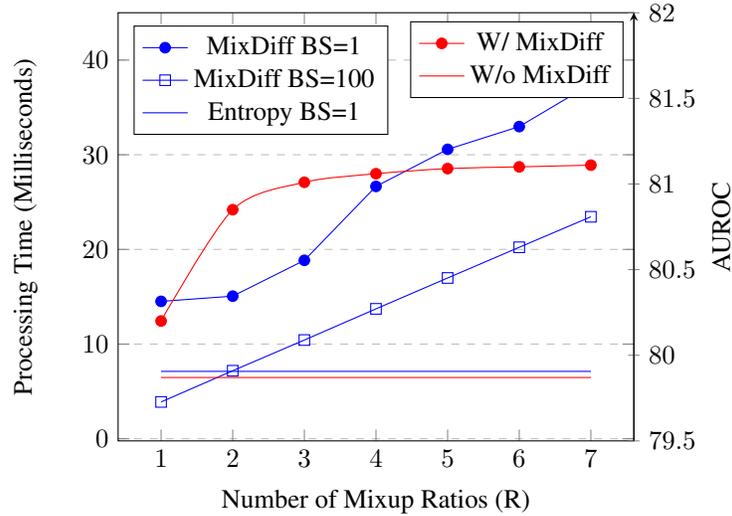
\begin{figure}[h]
\centering
    \begin{tikzpicture}
    \begin{axis}[
        xlabel={Number of Mixup Ratios (R)},
        ylabel={Processing Time (Milliseconds)},
        ymax=45,
        legend pos=north west,
        ymajorgrids=true,
        grid style=dashed,
    ]
    
    \addplot[
        color=blue,
        mark=*,
        ]
        coordinates {
        (1,14.511)(2,15.062)(3,18.846)(4,26.655)(5,30.569)(6,32.974)(7,37.464)
        };
        \addlegendentry{MixDiff BS=1}
        
    \addplot[
        color=blue,
        mark=square,
        ]
        coordinates {
        (1,3.886)
        (2,7.184)
        (3,10.426)
        (4,13.721)
        (5,16.981)
        (6,20.225)
        (7,23.444)
        };
        \addlegendentry{MixDiff BS=100}
    
      \addplot[mark=none, blue] coordinates {(1,7.117) (7,7.117)};
      \addlegendentry{Entropy BS=1}
    \end{axis}
    
    \begin{axis}[
      axis y line=right,
      axis x line=none,
      ymin=79.5, ymax=82,
      ylabel=AUROC
    ]
    \addplot[smooth,mark=*,red] 
      coordinates{
        (1,80.2)
        (2,80.85) 
        (3,81.01)
        (4,81.06)
        (5,81.09) 
        (6,81.1) 
        (7,81.11)
    }; \addlegendentry{W/ MixDiff}
    
      \addplot[mark=none, red] coordinates {(1,79.87) (7,79.87)};
      \addlegendentry{W/o MixDiff}
    
    \end{axis}
    \end{tikzpicture}
\caption{Blue lines represent the average processing time per target sample. BS denotes the batch size of target samples. Red lines represent AUROC scores of MixDiff+Entropy and entropy OOD scoring function evaluated on CIFAR100.}
\label{fig:proc_time}
\end{figure}

\section{Sensitivity analysis}

Figures \ref{fig:sensitivity_m} and \ref{fig:sensitivity_r} 
show the changes in AUROC score on the CIFAR100 dataset in regard to the number of oracle samples, $M$, and the number of Mixup ratios, $R$, respectively. We fix the other hyperparameters and only vary $M$ or $R$. MixDiff starts to enhance the detection performance of base scores with small values of $M$ or $R$, after which the performance gain remains relatively stable. For all OOD scoring functions, logits are used as the model $f(\cdot)$'s outputs when computing perturbed oracles' OOD scores. 

\begin{figure}[H]
\centering
\resizebox{0.7\columnwidth}{!}{
\centering
    \begin{subfigure}[t]{0.35\linewidth}
        \centering
        \includegraphics[width=\linewidth]{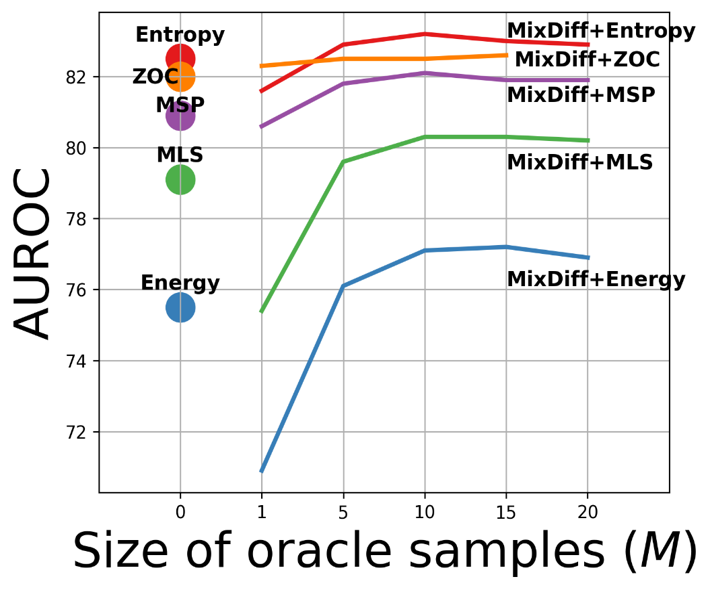}
        \caption{} \label{fig:sensitivity_m}
    \end{subfigure}\hfill
    \begin{subfigure}[t]{0.35\linewidth}
        \centering
        \includegraphics[width=\linewidth]{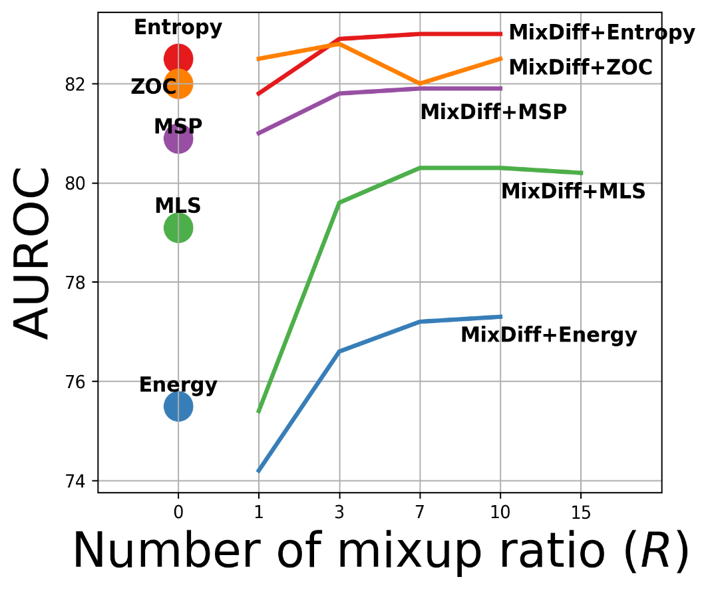}
        \caption{} \label{fig:sensitivity_r}
    \end{subfigure}\hfill
}
\caption{\textbf{(a)} Performance change in regard to the number of oracle samples, $M$. \textbf{{(b)}} Performance change in regard to the number of Mixup ratios, $R$.}
\label{fig:sensitivity}
\end{figure}

\section{Performance evaluation with other backbones}

We evaluate MixDiff's performance with various CLIP backbones and report the results in Table \ref{tab:diff_backbones}. We also report the classification accuracy of each classifier on the ID test set. MixDiff consistently improves the detection performance of the base score. Average AUROC scores over the five datasets are reported, and oracle as auxiliary setup is used for auxiliary sample selection. Prediction probabilities are used as model outputs.

\begin{table}[H]
\centering
\resizebox{0.7\columnwidth}{!}
{%
\begin{tabular}{lccc} \toprule
\multirow{2}{*}{CLIP Backbone} & \multicolumn{2}{c}{Average AUROC} &  \multirow{2}{*}{Classification Acc.} \\  \cmidrule{2-3}
& Entropy & MixDiff+Entropy & \\ \midrule
RN50 & 80.17 & 81.05 & 73.36  \\ 
RN50x4 & 82.12 & 83.23 & 78.20 \\ 
VIT-B/32 & 89.11 & 89.88 & 87.73 \\ 
ViT-L/14 & 93.40 & 94.19 & 92.59 \\ \bottomrule 
\end{tabular}
}
\caption{Performance evaluation with various CLIP backbones. }
\label{tab:diff_backbones}
\end{table}

\section{Performance evaluation under varying misclassification rates}
We construct the ID sets such that it would contain a specific percentage of misclassified samples and evaluate MixDiff's performance on various misclassification rates. Table \ref{tab:miscl} shows that MixDiff exhibits significant improvements over the baseline when the percentage of misclassified samples is high. Average AUROC scores over the five datasets are reported, and random ID samples are used as auxiliary samples. Prediction probabilities are used as model outputs.

\begin{table}[H]
\centering
\resizebox{0.5\columnwidth}{!}
{%
\begin{tabular}{ccc} \toprule
Misclassification rate & Entropy & MixDiff+Entropy \\ \toprule
100\% & 65.24 & 72.06  \\
75\% & 72.10 & 77.76 \\
50\% & 78.98 & 83.50  \\
25\% & 85.70 & 89.09  \\
0\% & 89.11 & 89.69 \\ \bottomrule
\end{tabular}
}
\caption{Performance under varying misclassification rates. }
\label{tab:miscl}
\end{table}

\section{Reproducibility}
We make our code publicly available at  \href{https://github.com/hy18284/mixdiff}{https://github.com/hy18284/mixdiff}.

\section{Qualitative analysis}

\subsection{OOD score density curves}
Figure \ref{fig:dist} plots the distributions of the base OOD scores with and without MixDiff. Table \ref{tab:area_dist} shows the area under the distribution curves of in-distribution (ID) and out-of-distribution (OOD) samples separated by the threshold (set by FPR95) for each approach. MixDiff scores alleviate overlap of ID and OOD samples' OOD scores. In Table \ref{tab:area_dist}, we observe that adding MixDiff scores increases the area of the ID samples' distribution under the threshold and decreases the area of ID samples' distribution over the threshold. For all OOD scoring functions, logits are used as the model $f(\cdot)$'s outputs when computing perturbed oracles' OOD scores.

\begin{table}[H]
    \begin{minipage}{\linewidth}
        \centering
        \includegraphics[width=1.0\textwidth]{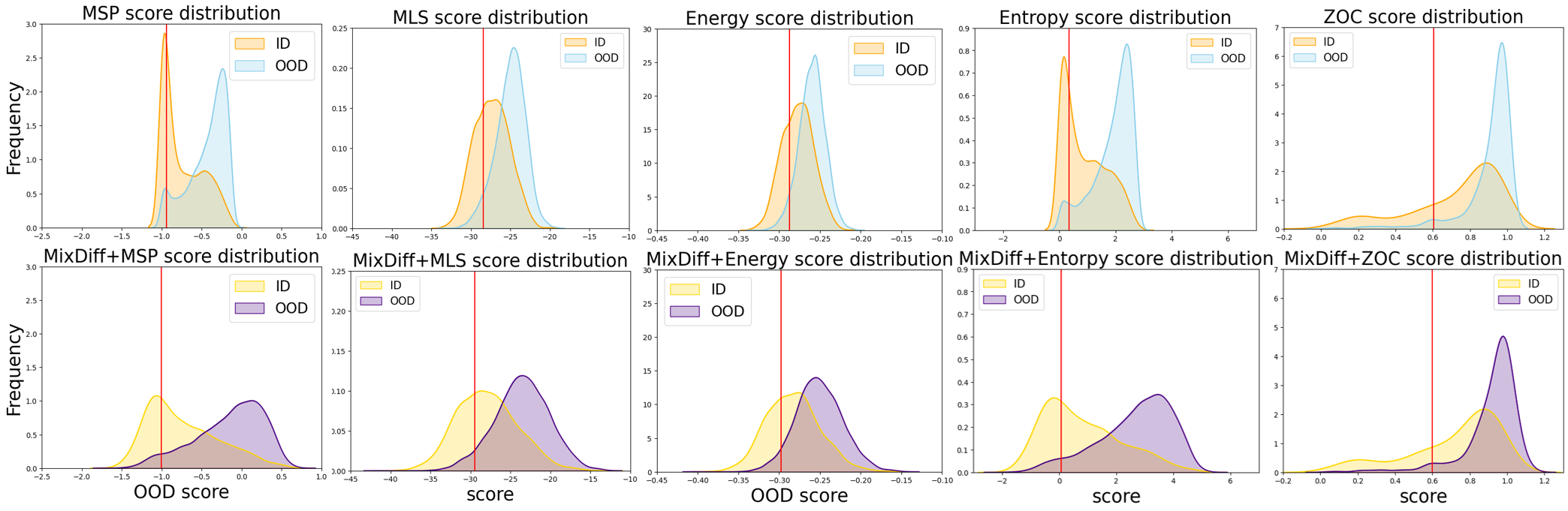}
        \captionof{figure}{Visualizations of distributions of the OOD scores with kernel density estimate plot. The red vertical lines represent 95$\%$ TPR thresholds.}
        \label{fig:dist}
    \end{minipage}\\
    \begin{minipage}{\linewidth}
        \centering
        \resizebox{\linewidth}{!}{%
        \begin{tabular}{lcccccccccc}
        \toprule
                                           & MSP    & MixDiff+MSP    & MLS    & MixDiff+MLS    & Energy & MixDiff+Energy & Entropy & MixDiff+Entropy & ZOC   & MixDiff+ZOC    \\ \midrule
        Threshold (95$\%$ TPR)            & -0.938 & -1.007         & -28.44 & -29.42         & -0.287 & -0.297         & 0.358   & 0.071           & 0.604 & 0.598          \\
        ID over threshold ($\downarrow$)   & 0.688  & \textbf{0.663} & 0.667  & \textbf{0.625} & 0.684  & \textbf{0.645} & 0.656   & \textbf{0.628}  & 0.731 & \textbf{0.725} \\
        ID under threshold ($\uparrow$)    & 0.296  & \textbf{0.322} & 0.322  & \textbf{0.360} & 0.304  & \textbf{0.340} & 0.331   & \textbf{0.358}  & 0.263 & \textbf{0.268} \\
        OOD over threshold    & 0.949  & {0.950} & 0.947  & 0.947          & 0.945  & {0.947} & 0.938   & {0.947}  & 0.947 & {0.949} \\
        OOD under threshold & 0.048  & {0.047} & 0.050  & {0.049} & 0.051  & {0.048} & 0.060   & {0.050}  & 0.051 & {0.049} \\ \bottomrule
        \end{tabular}%
        }
        \caption{The integral of density curves from Figure \ref{fig:dist} divided by 95\% TPR threshold. $\downarrow$ indicates lower is better and $\uparrow$ indicates higher is better.}
        \label{tab:area_dist}
    \end{minipage}
\end{table}

\subsection{Logit visualizations}
To see the effect of MixDiff in the logit level, we plot the logits of the target, oracle, and the corresponding mixed samples in Figure \ref{fig:logit_plot}. For all OOD scoring functions, logits are used as the model $f(\cdot)$'s outputs when computing perturbed oracles' OOD scores.

\begin{figure*}[ht]
\begin{subfigure}[b]{\linewidth}
    \centering
    \includegraphics[width=0.72\linewidth]{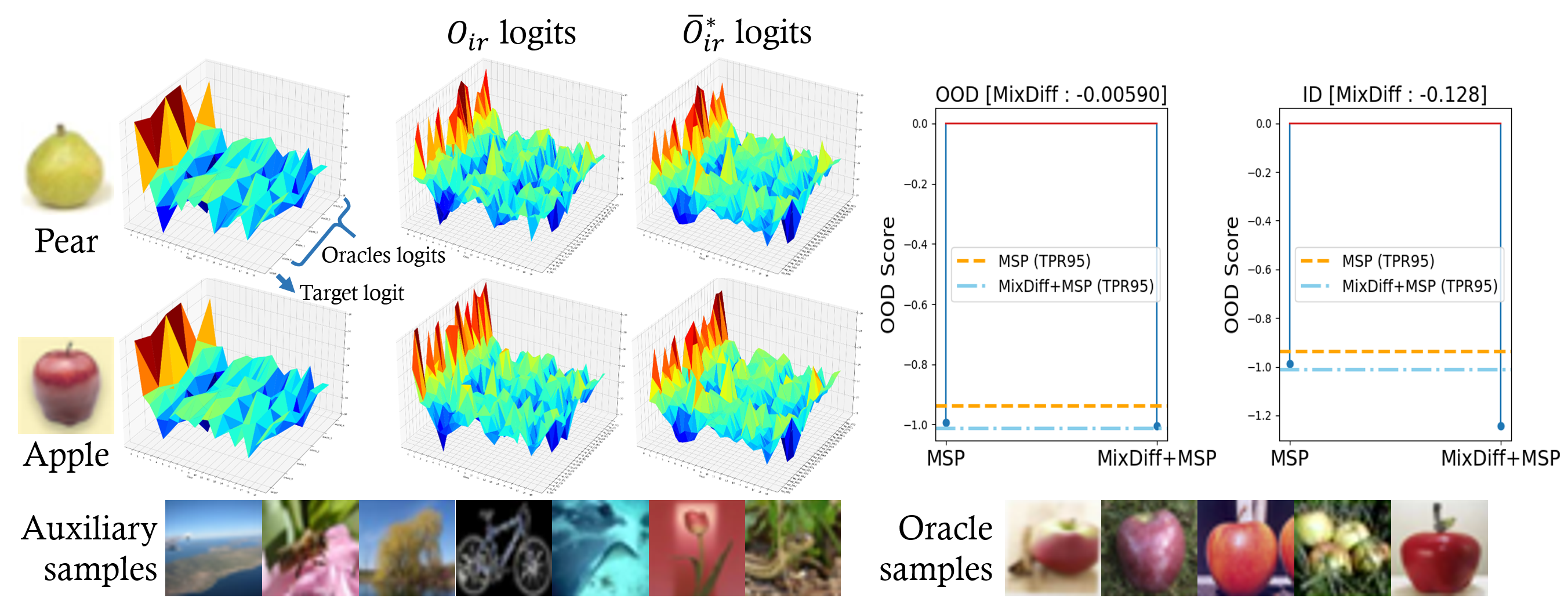}
    \caption{MixDiff+MSP\\}
    \label{fig:msp_example}
\end{subfigure}
\begin{subfigure}[b]{\linewidth}
    \centering
    \includegraphics[width=0.72\linewidth]{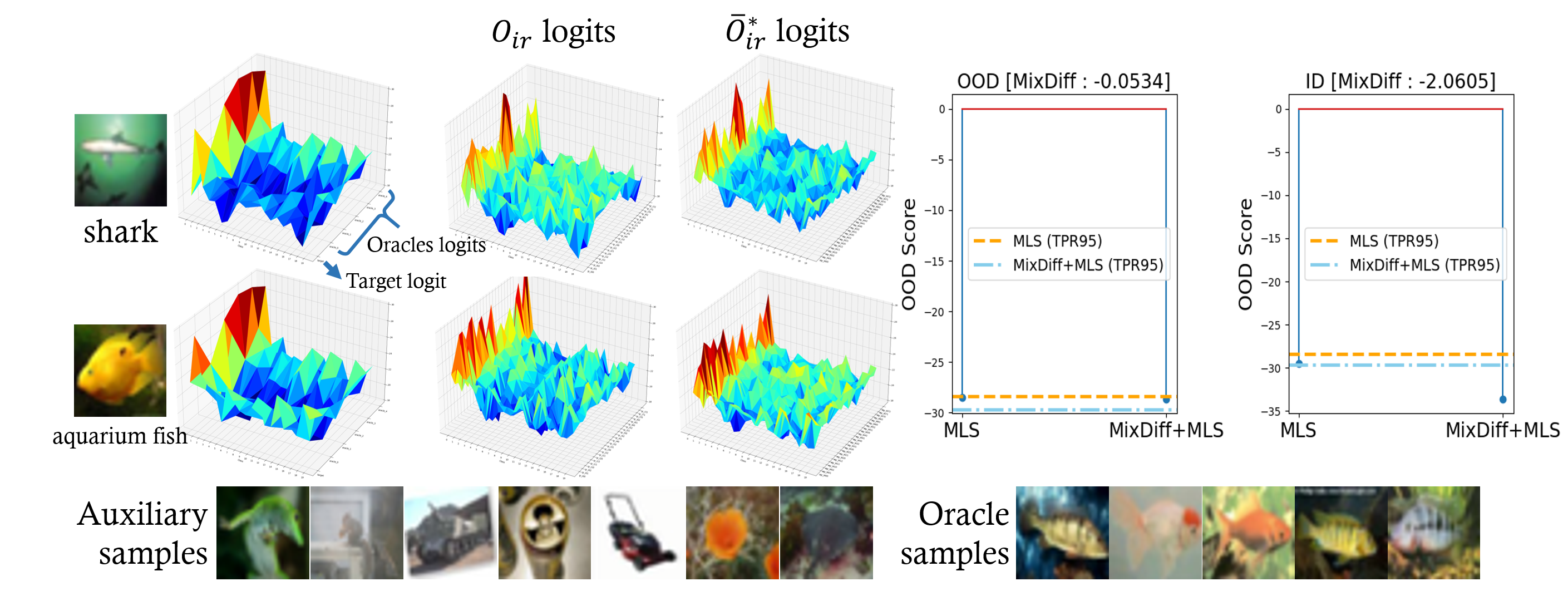}
    \caption{MixDiff+MLS\\}
    \label{fig:mls_example}
\end{subfigure}
\begin{subfigure}[b]{\linewidth}
    \centering
    \includegraphics[width=0.72\linewidth]{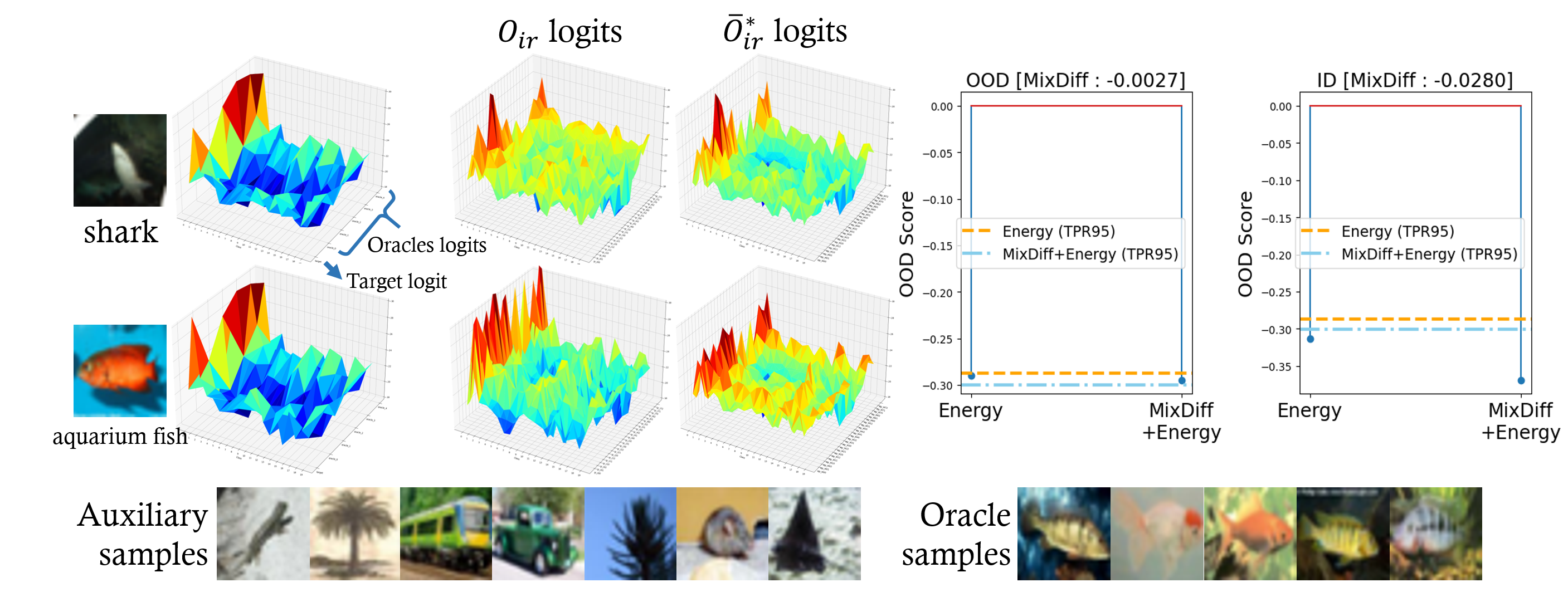}
    \caption{MixDiff+Energy\\}
    \label{fig:energy_example}
\end{subfigure}
\begin{subfigure}[b]{\linewidth}
    \centering
    \includegraphics[width=0.72\linewidth]{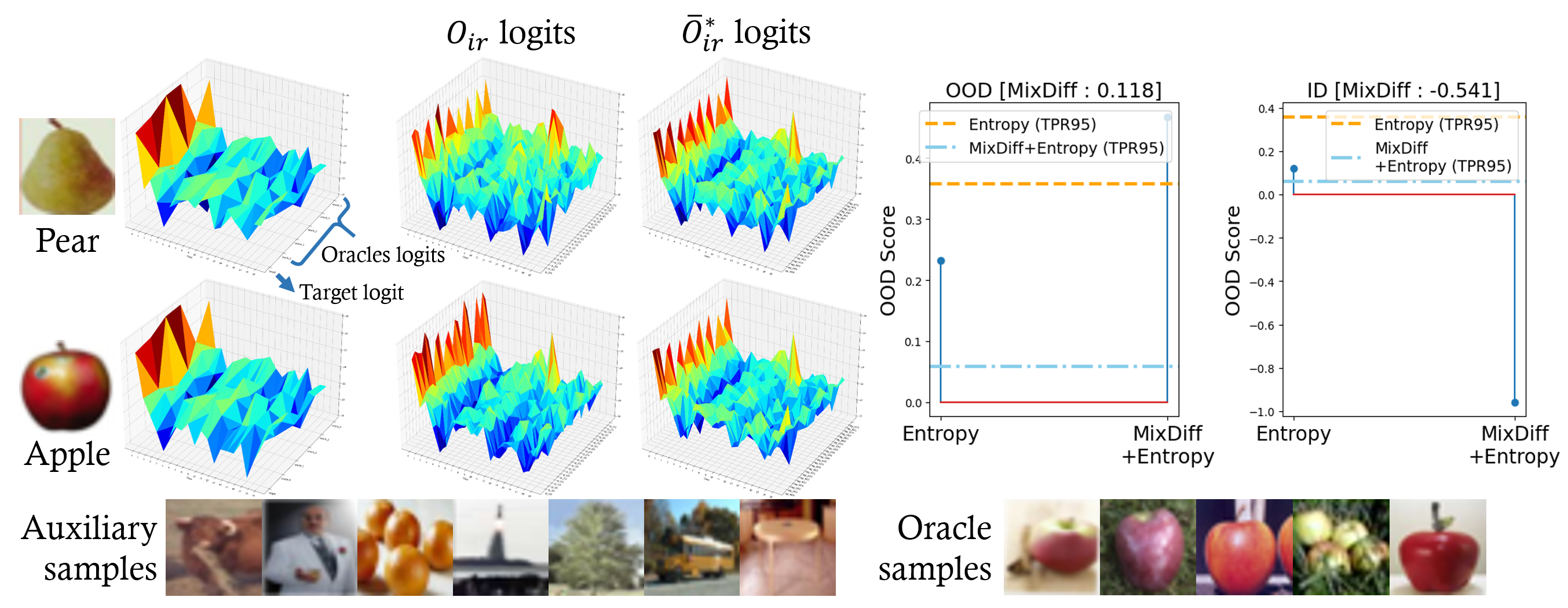}
    \caption{MixDiff+Entropy\\}
    \label{fig:entropy_example}
\end{subfigure}
\caption{Logit level changes after mixing identical auxiliary samples with target or oracle. The first row of logit graphs in Figures \ref{fig:msp_example}-\ref{fig:entropy_example} show that even though there is an OOD sample that is indistinguishable from the oracles at the logit level, the difference could be captured by mixing up with auxiliary samples. The the second row of 3D graphs in Figures \ref{fig:msp_example}-\ref{fig:entropy_example} show logits of the ID sample whose class is the same as the oracle samples. The two graphs to the right of each logit graph show the OOD scores and thresholds for the base OOD score function with and without MixDiff for the OOD and ID target samples, respectively.}
\label{fig:logit_plot}
\end{figure*}

\FloatBarrier

\bibliographystyle{named}
\bibliography{suppl}

\end{document}